
\pdfobjcompresslevel=0
\documentclass[a4paper,11pt,DIV=11,
abstract=on
]{scrartcl}

\usepackage[utf8]{inputenc}
\usepackage[T1]{fontenc}
\usepackage[english]{babel}
\usepackage{amsmath,amssymb,amsfonts,siunitx,commath}
\usepackage{amsthm}
\usepackage{tikz,url}
\usepackage{geometry}
\usepackage{mathtools}
\mathtoolsset{showonlyrefs}
\usepackage{subcaption}
\usepackage{algorithm}
\usepackage{algpseudocode}
\usepackage{hyperref}
\usepackage{enumerate}
\usepackage{listings}
\usepackage{multirow}

\title{Title}
\author{Johannes Hertrich}
\date{}

\DeclareMathOperator*{\argmax}{argmax}
\DeclareMathOperator*{\argmin}{argmin}


\newcommand{\R}{\mathbb{R}}
\newcommand{\E}{\mathbb{E}}

\newcommand{\N}{\mathbb{N}}

\newcommand\prox{\mathrm{prox}}

\newcommand\trace{\mathrm{tr}}

\newcommand\SPD{\mathrm{SPD}}
\newcommand\St{\mathrm{St}}

\newcommand{\tT}{\mathrm{T}}

\newcommand{\n}{\phantom{0}}

\theoremstyle{plain}
\newtheorem{lemma}{Lemma}[section]
\newtheorem{theorem}[lemma]{Theorem}
\newtheorem{corollary}[lemma]{Corollary}
\newtheorem{proposition}[lemma]{Proposition}

\theoremstyle{definition}
\newtheorem{definition}[lemma]{Definition}

\newtheorem{remark}[lemma]{Remark}


\begin{document}

\title{
PCA Reduced Gaussian Mixture Models 
with Applications in Superresolution
}
\author{
Johannes Hertrich\footnotemark[1]
\and
Dang Phuong Lan Nguyen \footnotemark[2] \footnotemark[3]
\and
Jean-Francois Aujol \footnotemark[2]
\and
Dominique Bernard \footnotemark[4]
\and
Yannick Berthoumieu \footnotemark[3]
\and
Abdellatif Saadaldin \footnotemark[4]
\and
Gabriele Steidl\footnotemark[1]
}
\date{\today}

\maketitle

\footnotetext[1]{
TU Berlin,
Stra{\ss}e des 17. Juni 136, 
D-10587 Berlin, Germany,
\{j.hertrich, steidl\}@math.tu-berlin.de.
} 

\footnotetext[2]{
Univ. Bordeaux, Bordeaux INP, CNRS, IMB, UMR 5251, F-33400 Talence, France,
\{lan.nguyen,jean-francois.aujol\}@math.u-bordeaux.fr
}

\footnotetext[3]{
Univ. Bordeaux, Bordeaux INP, CNRS, IMS, UMR 5218, F-33400 Talence, France,
yannick.berthoumieu@ims-bordeaux.fr}

\footnotetext[4]{
CNRS, Univ. Bordeaux, Bordeaux INP, ICMCB, UMR 5026, F-33600 Pessac, France,
\{dominique.bernard,abdellatif.saadaldin\}@icmcb.cnrs.fr}

\begin{abstract}
Despite the rapid development of computational hardware, 
the treatment of large and high dimensional data sets is still a challenging problem.
This paper provides a twofold contribution to the topic. 
First, we propose a Gaussian Mixture Model in conjunction with a reduction of the
dimensionality of the data in each component of the model by 
principal component analysis, called PCA-GMM. 
To learn the (low dimensional) parameters of the
mixture model we propose an EM algorithm whose M-step requires
the solution of constrained optimization problems. 
Fortunately, these constrained problems do not depend on the usually large number of samples
and can be solved efficiently by 
an (inertial) proximal alternating linearized minimization algorithm.
Second, we apply our PCA-GMM for the superresolution of 2D and 3D material images
based on the approach of Sandeep and Jacob.
Numerical results confirm the moderate influence of the dimensionality reduction 
on the overall superresolution result.
\end{abstract}

\section{Introduction} \label{sec:intro}
The motivation for this work was superresolution of 3D material images taken within the 
project ITN MUMMERING (\texttt{https://www.mummering.eu}). Superresolution aims in
reconstructing high resolution images from low resolution ones.
Here a common assumption is that the low resolution image is generated by $y=Ax+\epsilon$, where $\epsilon$ is some noise and $A$ is a possible unknown superresolution operator.
Since this is an ill-posed inverse problem, methods addressing this task usually incorporate some prior information.
One approach for solving the problem is based on Gaussian Mixture Models (GMMs).
Usually, the GMM approximates the distribution of the patches of natural images 
and its parameters are learned from some given data, see also \cite{DSD2011}.

In literature, there exist several approaches to tackle superresolution by GMMs.
Zoran and Weiss \cite{ZW2011} proposed to use the negative log-likelihood function of a GMM as regularizer of the inverse problem by estimating the high resolution image $x_H$ given the low resolution one $x_L$ by solving
\begin{align}
\argmin_{x} \|Ax_H-x_L\|^2-\lambda \sum_{i\in I}\log p(x_{H,i}),\label{eq_MAP}
\end{align}
where $p$ is the probability density function of the GMM and $(x_{H,i})_{i\in I}$ are the patches in $x_H$. 
For a special choice of $\lambda$, the solution of this problem can be interpreted as 
the maximum a postiori (MAP) estimator of $x_H$ under the prior assumption 
that the distribution of the patches in $x_H$ are given by the GMM.
This method is called expected patch log likelihood (EPLL) and it can be applied  for several inverse problems.
Various accelerations and an efficient implementation of EPLL were developed in \cite{PDDN2019}.
However, EPLL requires that the operator $A$ is known, which is usually not the case for the superresolution task.
Therefore, we prefer the alternative approach of Sandeep and Jacob \cite{SJ2016}, 
which does not require any knowledge about the operator $A$.
While for EPLL the GMM describes only the distribution of the patches from the high resolution image, here the idea is to use a joint GMM, which describes the distribution of pairs of high and low resolution patches.
Having learned a joint GMM, each high resolution patch is estimated separately from
the low resolution patch and the joint GMM using the minimal mean squared error estimator.
 For a more detailed description of the method proposed by Sandeep and Jacob \cite{SJ2016}, we refer to Section \ref{sec:super}.

However, any of these models requires the estimation of the parameters of a GMM using the patches of the images as data points.
For this, the maximum likelihood (ML) estimator is used, which corresponds to minimizing the negative log likelihood function.
The standard method, to find the ML estimator is the expectation maximization (EM) algorithm \cite{Byrne2017,DLR1977}.
Unfortunately, the EM algorithm for GMMs becomes very slow, as the number of data points becomes large and high dimensional, which is the case for our superresolution task, particularly if we have to deal with 3D images.\\
To overcome these performance issues, we reduce the dimension of the data points.
The standard method for dimensionality reduction is the principal component analysis (PCA) \cite{Pearson1901}. 
The main assumption of the PCA is that the high dimensional data points are approximately located in a lower dimensional affine subspace.
In this paper, we combine the GMM with a PCA 
by adding the  minimization term of the PCA and the negative log likelihood function of the GMM on the dimensionality reduced data points. 
We rewrite this minimization problem again as the negative log likelihood function of a Gaussian mixture model which has additional constraints on the parameters. We call this model PCA-GMM.
This representation allows in particular, to use a different PCA for each component of the GMM.
We derive an EM algorithm with a special M-step for finding a minimizer of our objective function.
The M-step requires solutions of maximization problems with contraints on the Stiefel manifold.
Fortunately, these problems do no longer depend on the large number of sampling points
and they can be efficiently solved by the (inertial) proximal alternating linearized minimization algorithm (PALM)
for which some convergence results can be ensured.
The idea to couple parameter learning with dimension reduction is not new.
So the authors of \cite{BGS2006,HBD2018} propose directly a GMM with constraint covariance matrices. 
It is based on an extension of the PCA, which was proposed in \cite{TB1999} 
to replace the affine space in the PCA by the union of finitely many affine spaces
using a mixture model of probabilistic PCAs.
The relation to our approach is analyzed in a remark in Section \ref{sec:model}.                                                              
Using our new PCA-GMM model within the superresolution model of Sandeep and Jacob \cite{SJ2016}, 
we provide numerical examples of 2D and 3D material images. For making the examples reproducible, we provide the code online\footnote{\url{https://github.com/johertrich/PCA_GMMs}}.

The paper is organized as follows: 
in Section \ref{sec:prelim} we briefly review the two main ingredients for our model, namely GMMs and PCA.
We derive our PCA-GMM model in Section \ref{sec:model}.
In Section \ref{sec:alg}, an EM algorithm with a special constrained optimization task in the M-step is proposed
for minimizing the objective function. The solution of the constrained minimization problem via (inertial) PALM
is investigated.
The superresolution method using the PCA-GMM model is described in Section \ref{sec:super}.
Finally, Section \ref{sec:num} shows numerical examples of superresolution based on our PCA-GMM model 
on 2D and 3D images. Conclusions are drawn in Section \ref{sec:conclusions}.
\section{Preliminaries} \label{sec:prelim}
In this section, we briefly revisit the two building blocks of our approach, 
namely Gaussian mixture models and principal component analysis.
We need the following notation.
By $\SPD(n) \subset \R^{n,n}$ we denote the cone of \emph{symmetric positive definite $n \times n$ matrices},
by $O(n)$ the group of orthogonal $n \times n$ matrices,
by 
$$
\St(d,n) \coloneqq \bigl\{ U \in \mathbb R ^{n,d} : U^\tT U = I_d\bigr\}, \quad n \ge d,
$$
the \emph{Stiefel manifold} and by
$\Delta_K \coloneqq \{\mathbf{\alpha} = (\alpha_k)_{k=1}^K \in \R_{\ge 0}^K: \sum_{k=1}^K \alpha_k = 1\}$
the \emph{probability simplex}. We write $1_n$ for the vector with all $n$ components equal to 1. Further, we denote by $\|\cdot\|_F$ the Frobenius norm.

\paragraph{Gaussian Mixture Models}
The (absolutely continuous) \emph{$d$-dimensional normal distribution} 
$\mathcal N(\mu,\Sigma)$ with mean $\mu\in \R^n$ 
and positive semi-definite covariance matrix 
$\Sigma\in \SPD(n)$ has the  density
	\begin{equation}\label{density_normal}
		f(x|\mu,\Sigma) = (2\pi)^{-\frac{n }{2}} \abs{\Sigma}^{-\frac{1}{2}} 
		\,\exp\left(-\frac{1}{2}(x-\mu)^\tT \Sigma^{-1}(x-\mu) \right).
	\end{equation}  
	Note, that not all multivariate normal distributions are absolutely continuous,
	in particular, the covariance matrix is not necessarily invertible.
    However, for the rest of the paper, we focus on normal distributions with
    positive definite covariance matrices, which are invertible.
A \emph{Gaussian mixture model} (GMM) is a random variable 
with probability density function
\begin{equation}\label{eq:density_mm}
p(x)=\sum_{k=1}^K \alpha_k f(x|\mu_k,\Sigma_k), \qquad \alpha\in\Delta_K.
\end{equation}
For samples $\mathcal X = \{x_1,...,x_N\}$, the maximum likelihood (ML) estimator of the parameters 
${\mathbf \alpha} = (\alpha_k)_{k=1}^K$,
${\mathbf \mu} = (\mu_k)_{k=1}^K$ and 
${\mathbf \Sigma} = (\Sigma_k)_{k=1}^K$ 
of a GMM can be found
by minimizing its \emph{negative log-likelihood function} 
\begin{align}\label{eq:logLike}
L(\mathbf{\alpha},\mathbf{\mu},\mathbf{\Sigma}|\mathcal X)
&=
-\sum_{i=1}^N\log \Big(\sum_{k=1}^K\alpha_k f(x_i|\mu_k,\Sigma_k)\Big)
\end{align}
for
${\mathbf \alpha} \in \triangle_K$, $\mu_k \in \R^n$, and $\Sigma_k \in \SPD(n)$, $k=1,\ldots,K$.

In the following, we use the notation $\vartheta \coloneqq (\mu, \Sigma)$ to address the parameters of a Gaussian distribution.
A standard minimization algorithm for finding the ML estimator of the parameters $\alpha_k$ and
$\vartheta_k = (\mu_k, \Sigma_k)$, $k=1,\ldots,K$ of a GMM
is the so-called \emph{EM algorithm} \cite{Byrne2017,DLR1977} detailed in Alg. \ref{alg_em_mm}.
 
\begin{algorithm}[!ht]
\caption{EM Algorithm for Mixture Models}\label{alg_em_mm}
\begin{algorithmic}
\State Input: $x=(x_1,...,x_N)\in\R^{n\times N}$, initial estimate $\vartheta^{(0)}$.
\For {$r=0,1,...$}
\State \textbf{E-Step:} For $k=1,...,K$ and $i=1,\ldots,N$ compute 
$$
\beta_{i,k}^{(r)}=\frac{\alpha_k^{(r)}f(x_i|\vartheta_k^{(r)})}{\sum_{j=1}^K\alpha_j^{(r)}f(x_i|\vartheta_j^{(r)})}
$$
\State \textbf{M-Step:} For $k=1,...,K$ compute
\begin{align}
\alpha_k^{(r+1)}&=\frac1N \sum_{i=1}^N \beta_{i,k}^{(r)},\\
\vartheta_k^{(r+1)}&=\argmax_{\vartheta_k}\Big\{\sum_{i=1}^{N} \beta_{i,k}^{(r)}\log(f(x_i|\vartheta_k))\Big\}.
\end{align}
\EndFor
\end{algorithmic}
\end{algorithm}

For Gaussian density functions \eqref{density_normal}, the iterates
$\vartheta_k^{(r+1)}$, $k=1,\ldots,K$,
i.e. the maximization in the M-Step of Alg. \ref{alg_em_mm}
can be simply computed by
\begin{align} \label{easy}
\mu_k^{(r+1)} 
&= 
\frac{\sum_{i=1}^N\beta_{ik}^{(r)}x_i}{\sum_{i=1}^N\beta_{ik}^{(r)}} = \frac{1}{N\alpha_k^{(r+1)}} \, m_k^{(r)},
\\ 
\Sigma_k^{(r+1)} 
&=
\frac{\sum_{i=1}^N \beta_{ik}^{(r)} (x_i-\mu_k^{(r+1)})(x_i-\mu_k^{(r+1)})^\tT}{\sum_{i=1}^N \beta_{ik}^{(r)}}
=\frac{1}{N\alpha_k^{(r+1)}} \, C_k^{(r)}-\mu_k^{(r+1)}(\mu_k^{(r+1)})^\tT,\label{easy2}
\end{align}
where
\begin{equation}
m_k^{(r)} = \sum_{i=1}^N \beta_{ik}^{(r)}x_i 
\quad \text{and}\quad
C_k^{(r)} = \sum_{i=1}^N \beta_{ik}^{(r)}x_ix_i^\tT.
\end{equation}

\paragraph{Principal Component Analysis} In many applications, the dimension of the data is huge
such that dimensionality reduction methods become necessary. The working horse for dimensionality reduction
is the principal component analysis (PCA).
Given data samples
$\mathcal X= \{x_1,...,x_N\}$ in $\mathbb R^n$,
the classical PCA finds the $d$-dimensional affine space
$\{U \,  t +   b:  t \in \R^d\}$, $1 \le d \ll n$ having smallest squared distance
from the samples 
by minimizing
\begin{equation} \label{PCA_1}
         P(U,b) 
        = 
         \sum_{i=1}^N \|(U U^\tT -  I_n )( x_i - b)\|^2                                
\end{equation}
for $b \in \R^n$ and $U \in \St(d,n)$. 
It is not hard to check that the affine subspace 
goes through the \emph{offset} (\emph{bias}) 
$
b = \bar x \coloneqq \tfrac{1}{N}(x_1 + \ldots + x_N)
$
so that we can reduce our attention to the minimization with respect to $U \in \St(d,n)$, i.e., consider
$$
P(U)        = 
         \sum_{i=1}^N \|(U U^\tT -  I_n )y_i)\|^2 , \quad y_i = x_i - \bar x.
$$
Note, that a minimizer can be derived explicitly as the matrix $\hat U$, whose columns are given by the eigenvectors corresponding to the $d$ largest eigenvalues of the empirical covariance matrix $\sum_{i=1}^Ny_iy_i^\tT$. 
This minimizer is not unique, since it holds $P(UV)=P(U)$ for any orthogonal matrix $V\in O(d)$.

\section{PCA-GMM Model} \label{sec:model}
In this section, we propose a GMM which incorporates a dimensionality reduction model via PCA.
More precisely, we want to consider Gaussian distributions only on smaller subspaces of the original data space. 

A first idea would be to couple the GMM and the PCA model in an additive way and to minimize
for data samples $\mathcal X= \{x_1,...,x_N\}$ in $\mathbb R^n$ the function
\begin{equation}\label{init}
F(U,\alpha,\vartheta) = L \big(\mathbf{\alpha},\mathbf{\vartheta}|{\mathcal X}_{\mathrm{low}} \big) + \frac{1}{2 \sigma^2} P\left(U \right), \quad \sigma >0
\end{equation}
for $U \in \St(d,n)$, ${\mathbf \alpha} \in \triangle_K$, $\mu_k \in \R^d$, 
and $\Sigma_k \in \SPD(d)$, $k=1,\ldots,K$, where
$$
{\mathcal X}_{\mathrm{low}} \coloneqq \{U^\tT y_1, \ldots, U^\tT y_N\}, \quad y_i = x_i -\bar x.
$$
It is important that the negative log-likelihood function $L$ acts with respect to $\vartheta$ 
only on the lower dimensional space $\mathbb R^d$.
The function $F$ can be rewritten as
\begin{align}
F(U,\alpha,\vartheta) &= -\sum_{i=1}^N \bigg( \log \Big(\sum_{k=1}^K\alpha_k f\big(U^\tT y_i |\vartheta_k \big)\Big)
-  \frac{1}{2 \sigma^2} \|(U U^\tT -  I_n )y_i)\|^2 \bigg)\\
&=
-\sum_{i=1}^N \bigg( \log \Big( \sum_{k=1}^K \alpha_k f \big(U^\tT y_i |\vartheta_k \big) \, 
\exp\big(  -\tfrac{1}{2 \sigma^2}\|(U U^\tT -  I_n )y_i) \|^2\big) \Big) \bigg). \label{reform_1}
\end{align}
However, knowing that the samples were taken from $K$ different Gaussian distributions it makes more sense 
to reduce the dimension according to the respective distribution. Based on the reformulation \eqref{reform_1} and
using the notation  $\mathbf{U} = (U_k)_{k=1}^K$ and $\mathbf{b} = (b_k)_{k=1}^K$,
we propose to minimize the following \textbf{PCA-GMM model}:
\begin{align} \label{gmm_pca_1}
 F(\mathbf{U},\mathbf{b},\mathbf{\alpha},\mathbf{\vartheta}) 
\quad \mathrm{subject \; to} \quad                 
&\alpha \in  \Delta_K, \, U_k \in \St(d,n),\Sigma_k \in \SPD(d),\;  k=1,\ldots,K,
\end{align}
where $b_k \in \R^n$, $\mu_k \in \R^d$ and 
\begin{align}\label{eq_constraint_MM}
 F(\mathbf{U},\mathbf{b},\mathbf{\alpha},\mathbf{\vartheta})
&\coloneqq 
-\sum_{i=1}^N \log
\bigg(\sum_{k=1}^K \alpha_k f(U_k^\tT y_{ik} |\vartheta_k) \, \exp \big(-\tfrac{1}{2\sigma^2}
\|(I_n-U_k U_k^\tT)y_{ik}\|^2 \big) 
\bigg),\\
& \quad y_{ik} \coloneqq x_i-b_k, \quad k=1,\ldots,K,\, i=1,\ldots,N.
\end{align}
Clearly, if $U_k = U$ and $b_k = \bar x$ for all $k=1,\ldots,K$, we get back to model \eqref{init}.

The next lemma shows that our PCA-GMM model can be rewritten as a GMM model
whose parameters incorporate those of the PCA.

\begin{lemma}\label{anders}
Let $\mu \in \R^d$, $\Sigma \in \SPD(d)$, $U \in \St(d,n)$, $b\in \R^n$ and
let $f$ be the Gaussian density function \eqref{density_normal}.
Then the following relation holds true: 
\begin{align}\label{eq_mult}
f\big(U^\tT (x-b)|\mu,\Sigma \big) \, \exp\big(-\tfrac{1}{2\sigma^2} \|(I_n-U U^\tT)(x-b)\|^2 \big)
&= 
(2\pi\sigma^2)^{\frac{n-d}{2}} f(x|\tilde \mu,\tilde \Sigma),
\end{align}
where 
\begin{align}
\tilde \Sigma &=
\big(\tfrac{1}{\sigma^2} (I_n-U U^\tT) + U \Sigma ^{-1} U^\tT \big)^{-1} \in \SPD(n),
\label{eq_sig_new}\\
\tilde \mu &= \tilde \Sigma U \Sigma^{-1} \mu + b \in \mathbb R^n.\label{eq_mu_new}
\end{align}
\end{lemma}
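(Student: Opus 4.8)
The plan is to expand both sides of \eqref{eq_mult} as exponentials of quadratic forms in $x$ and to match the quadratic part, the linear part, and the scalar normalization separately. Writing $z \coloneqq x - b$ and recalling that $U^\tT U = I_d$ makes $I_n - U U^\tT$ an orthogonal projector, so that $\|(I_n - U U^\tT) z\|^2 = z^\tT (I_n - U U^\tT) z$, the left-hand side of \eqref{eq_mult} becomes
\[
(2\pi)^{-d/2} \abs{\Sigma}^{-1/2} \exp\Big(-\tfrac12 Q(z)\Big), \quad Q(z) = (U^\tT z - \mu)^\tT \Sigma^{-1}(U^\tT z - \mu) + \tfrac{1}{\sigma^2} z^\tT (I_n - U U^\tT) z,
\]
where I use that $f(U^\tT z \mid \mu, \Sigma)$ is a $d$-dimensional density. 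Expanding $Q$ and collecting terms, the quadratic part in $z$ is $z^\tT\big(U \Sigma^{-1} U^\tT + \tfrac{1}{\sigma^2}(I_n - U U^\tT)\big) z$, whose matrix is exactly $\tilde\Sigma^{-1}$ from \eqref{eq_sig_new}; the linear part is $-2(U \Sigma^{-1}\mu)^\tT z$; and the constant is $\mu^\tT \Sigma^{-1}\mu$.

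The central step is to obtain a closed form for $\tilde\Sigma$. I would guess $\tilde\Sigma = U \Sigma U^\tT + \sigma^2 (I_n - U U^\tT)$ and verify it by multiplying with the right-hand side of \eqref{eq_sig_new}, using the projector identities $U^\tT U = I_d$, $(I_n - U U^\tT) U = 0$, and $(I_n - U U^\tT)^2 = I_n - U U^\tT$; all cross terms cancel and the product telescopes to $U U^\tT + (I_n - U U^\tT) = I_n$. With this closed form in hand, a short computation gives the key identity $U^\tT \tilde\Sigma U = \Sigma$. Completing the square in $Q(z)$ then shows that the minimizer is $c = \tilde\Sigma U \Sigma^{-1}\mu$, so that $z - c = x - (b + \tilde\Sigma U \Sigma^{-1}\mu) = x - \tilde\mu$, matching \eqref{eq_mu_new}; and the leftover constant $\mu^\tT \Sigma^{-1}\mu - c^\tT \tilde\Sigma^{-1} c = \mu^\tT(\Sigma^{-1} - \Sigma^{-1} U^\tT \tilde\Sigma U \Sigma^{-1})\mu$ vanishes precisely because $U^\tT \tilde\Sigma U = \Sigma$. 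Hence the two exponents agree identically.

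It remains to match the scalar prefactors, and this is where the constant $(2\pi\sigma^2)^{(n-d)/2}$ must appear. The only missing ingredient is the determinant identity $\abs{\tilde\Sigma} = (\sigma^2)^{n-d}\abs{\Sigma}$, which I would prove by completing $U$ to an orthogonal matrix $[U\ V] \in O(n)$ with $V \in \St(n-d,n)$ spanning the orthogonal complement of $\mathrm{range}(U)$; in this basis $[U\ V]^\tT \tilde\Sigma [U\ V]$ is block diagonal with blocks $\Sigma$ and $\sigma^2 I_{n-d}$, giving the determinant at once. Substituting $\abs{\tilde\Sigma} = (\sigma^2)^{n-d}\abs{\Sigma}$ and collecting the powers of $2\pi$ (the exponents $-d/2$ on the left and $(n-d)/2 - n/2 = -d/2$ on the right coincide) shows that the right-hand prefactor reduces exactly to $(2\pi\sigma^2)^{(n-d)/2}\,(2\pi)^{-d/2}\abs{\Sigma}^{-1/2}$, completing the identity. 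I expect the main obstacle to be the two structural computations involving $\tilde\Sigma$ --- its closed form and its determinant --- both of which become routine once $U$ is extended to an orthonormal basis adapted to $\mathrm{range}(U)$; the vanishing of the constant term in the exponent is the subtle point that makes the clean prefactor emerge.
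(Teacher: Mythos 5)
Your proposal is correct and follows essentially the same route as the paper's proof: both match the quadratic, linear, and constant parts of the exponents using the key identity $U^\tT \tilde\Sigma U = \Sigma$, and both obtain the determinant relation $|\tilde\Sigma| = (\sigma^2)^{n-d}|\Sigma|$ by completing $U$ to an orthogonal matrix and exploiting the resulting block-diagonal structure. Your explicit closed form $\tilde\Sigma = U\Sigma U^\tT + \sigma^2(I_n - UU^\tT)$ and the completion of the square are only cosmetic reorganizations of the paper's computation, which instead block-diagonalizes $\tilde\Sigma^{-1}$ and expands both sides directly in $x$.
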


\begin{proof}
1. First of all, we verify that the matrices $\tilde \Sigma$ are well defined, i.e. 
that $\tfrac{1}{\sigma^2}(I_n-U U^\tT) + U (\Sigma)^{-1} U^\tT$ is invertible. 
Let $\tilde U\in\R^{n,(n-d)}$ such that $V\coloneqq(U|\tilde U)$ is an orthogonal matrix.
Then we obtain
\begin{align}
V^\tT \tilde \Sigma^{-1} V =
V^\tT(\tfrac{1}{\sigma^2}(I-UU^\tT)+U\Sigma^{-1}U^\tT)V
=\tfrac{1}{\sigma^2}(I-V^\tT U U^\tT V) + V^\tT U \Sigma^{-1}U^\tT V.
\end{align}
Since $(V^\tT U)^\tT = U^\tT V =(I_d|0)$, 
this is equal to
\begin{align}\label{well-def}
V^\tT \tilde \Sigma^{-1} V = \tfrac{1}{\sigma^2}\left(\begin{array}{c|c}0&0\\\hline0&I_{n-d}\end{array}\right)
+
\left(\begin{array}{c|c}\Sigma^{-1}&0\\\hline0&0\end{array}\right)
&=
\left(\begin{array}{c|c}\Sigma^{-1}&0\\\hline 0&\tfrac1{\sigma^2}I_{n-d}\end{array}\right)
\end{align}
and the last matrix is invertible. 

2. We have to show that
\begin{align*}
&(2\pi)^{-\frac{d}{2}}|\Sigma|^{-\frac12} 
\exp \Big( - \frac{1}{2\sigma^2} \|(I_n - U U^\tT) (x-b)\|^2 - \frac12 (U^\tT(x-b) -\mu)^\tT \Sigma^{-1}  (U^\tT(x-b) -\mu) \Big)\\
&=  (2\pi)^{-\frac{n}{2}} |\tilde \Sigma|^{-\frac12} \exp \Big( -\frac12(x - \tilde \mu)^\tT \tilde \Sigma^{-1} (x- \tilde \mu) \Big) \\
&=  (2\pi)^{-\frac{n}{2}} |\tilde \Sigma|^{-\frac12} \exp \Big( -\frac12 x^\tT  \tilde \Sigma^{-1} x +
\tilde \mu^\tT \tilde \Sigma^{-1} x - \frac12 \tilde \mu^\tT \tilde \Sigma^{-1} \tilde \mu \Big). 
\end{align*}
Straightforward calculation together with the observation that
$U^\tT \tilde \Sigma U = \Sigma$ and hence $U^\tT \tilde \Sigma^{-1} U = \Sigma^{-1}$ gives
\begin{align}
&\frac{1}{2\sigma^2} \|(I_n - U U^\tT) (x-b)\|^2 + \frac12 (U^\tT(x-b) -\mu)^\tT \Sigma^{-1}  (U^\tT(x-b) -\mu) \\
&=
\frac{1}{2} x^\tT \left(\tfrac{1}{\sigma^2}(I_n-U U^\tT) + U \Sigma^{-1} U^\tT \right) x
- \left(\tfrac{1}{\sigma^2} b^\tT (I_n - U U^\tT) + (\mu^\tT + b^\tT U) \Sigma^{-1} U^\tT \right) x\\
& \quad + \frac12 (U^\tT b + \mu)^\tT \Sigma^{-1} (U^\tT b + \mu) + \tfrac{1}{2\sigma^2} b^\tT (I_n-U U^\tT) b\\
&=
\frac12 x^\tT \tilde \Sigma^{-1} x - \tilde \mu^\tT \tilde \Sigma^{-1} x + \frac12 \tilde \mu^\tT \tilde \Sigma^{-1} \tilde \mu.
\end{align}
Finally, we see by \eqref{well-def} that $|\tilde \Sigma|^{-1} = \sigma^{-2(n-d)} |\Sigma|^{-1}$ .
\end{proof}

By Lemma \ref{anders}, we can rewrite our objective function $F$ in \eqref{eq_constraint_MM} 
with $\tilde \vartheta = (\tilde \mu, \tilde \Sigma)$ defined by \eqref{eq_sig_new} and \eqref{eq_mu_new} 
with corresponding indices as 
\begin{align}
F(\mathbf{U},\mathbf{b},\mathbf{\alpha}, \mathbf{\vartheta}) 
&= 
-\sum_{i=1}^N \log\Big(\sum_{k=1}^K \alpha_k f(x_i|\tilde \vartheta_k)\Big)
+ (n-d) \log \sqrt{2\pi \sigma^2}\\
&= L(\mathbf{\alpha},\tilde{\vartheta}|\mathcal X) + (n-d) \log \sqrt{2\pi \sigma^2}. \label{L*}
\end{align}
Up to the constant this is a negative log-likelihood function of a GMM. However,
when minimizing this function, we have to take the constraints \eqref{eq_sig_new} and \eqref{eq_mu_new} into account.
More precisely, our model in \eqref{gmm_pca_1} can be rewritten as \textbf{PCA-GMM model}:
\begin{align} \label{model}
\mathrm F(\mathbf{U},\mathbf{b},\mathbf{\alpha}, \mathbf{\vartheta}) 
\coloneqq L(\mathbf{\alpha},\tilde{\mathbf{\vartheta}}|\mathcal X)
\quad \mathrm{subject \; to} \quad U_k \in \St(d,n), \, \alpha \in \Delta_K,\,  \Sigma_k \in \SPD(d), 
\end{align}
where
\begin{align} \label{subst_1}
\tilde \Sigma_k &=
\big(\tfrac{1}{\sigma^2} (I_n-U_k U_k^\tT) + U_k \Sigma_k ^{-1} U_k^\tT \big)^{-1}, \; 
\tilde \mu_k = \tilde \Sigma_k U \Sigma_k^{-1} \mu_k + b_k, \quad k=1,\ldots,K.
\end{align}

The choice of $\mu_k$ and $b_k$ is redundant. This can be seen as follows, for any $\mu_k$ and $b_k$, define $\hat \mu_k=0$ and $\hat b_k=\tilde \mu_k$. Then, it holds
\begin{align}
\tilde \Sigma_k &=
\big(\tfrac{1}{\sigma^2} (I_n-U_k U_k^\tT) + U_k \Sigma_k ^{-1} U_k^\tT \big)^{-1}, \; 
\tilde \mu_k = \tilde \Sigma_k U \Sigma_k^{-1} \hat\mu_k + \hat b_k, \quad k=1,\ldots,K
\end{align}
such that $\mathrm{F}(\mathbf{U},\hat{\mathbf{b}},\mathbf{\alpha},\hat{\mathbf{\vartheta}})=\mathrm{F}(\mathbf{U},\mathbf{b},\mathbf{\alpha},\mathbf{\vartheta})$.
Consequently, in the M-step of the EM algorithm in Section \ref{sec:alg-em} we obtain that the update for the mean $\mu$ is given by $\mu_k=0$.

\begin{remark}[Different component dimensions]\label{rem_different}
So far the dimension $d$ is the same for all components $k=1,...,K$. 
But by some simple adjustments, the PCA-GMM model can also be rewritten with $U_k\in\St(d_k,n)$, $\mu_k\in\R^{d_k}$ and $\Sigma_k\in\SPD(d_k)$, where the $d_k$ are not necessarily equal for all $k$.
However, to keep the notations as simple as possible, we will restrict our analysis to the case that $d_k=d$ for $k=1,...,K$.
Nevertheless,  all the results of this paper can be derived analogously for other choices of $d_k$.
\end{remark}

\begin{remark}[Learning $\sigma$]\label{rem_learn_sigma}
The function $F$ in \eqref{init}, resp. \eqref{eq_constraint_MM}, \eqref{L*} is strictly decreasing in $\sigma$. 
Thus it does not make sense to minimize $F$ with respect to $\sigma$.

However, the function $\mathrm F = F-\tfrac{n-d}{2}\log(2\pi\sigma^2)$ in \eqref{model}
can be optimized with respect to $\sigma$. To keep the M-step of the EM algorithm simple,
we associate to each summand in the mixture model an own $\sigma_k$, $k=1,\ldots,K$ 
such that $\tilde \Sigma$ in \eqref{subst_1} becomes
\begin{align} \label{subst_2}
\tilde \Sigma_k &=
\big(\tfrac{1}{\sigma_k^2} (I_n-U_k U_k^\tT) + U_k \Sigma_k ^{-1} U_k^\tT \big)^{-1}.
\end{align}
In this case, we use the notation $\mathbf{\sigma}=(\sigma_k)_{k=1}^K$.
\end{remark}

\subsection*{Related Work}
There are several relations of the PCA-GMM model to other models proposed in the literature, in particular to mixtures of probabilistic PCAs (MPPCA) \cite{TB1999}, high dimensional data clustering (HDDC) \cite{BGS2006} and high-dimensional mixture models for unsupervised image denoising (HDMI) \cite{HBD2018}.
In the following, we shortly review these methods and comment on similarities and differences to the PCA-GMM model.\\

For understanding the relation to other models, we first need the following reformulation of the covariance matrices $\tilde \Sigma$ from the PCA-GMM model.
We have as in \eqref{well-def} for matrices $\tilde\Sigma$ of the form \eqref{eq_sig_new}
and an orthogonal matrix $V=(U|\tilde U)$ that
\begin{align}
V^\tT\tilde\Sigma V&
= \left(\begin{array}{c|c}\Sigma&0\\\hline 0&\sigma^2I_{n-d}\end{array}\right),
\end{align}
so that
\begin{align}
\bigg\{
&\tilde \Sigma = \big(\tfrac{1}{\sigma^2} (I_n-U U^\tT) + U \Sigma ^{-1} U^\tT \big)^{-1}: \,
 U \in \St(d,n), \, \Sigma \in \SPD(d) 
\bigg\}
\\
&=
\bigg\{ Q^\tT \left(\begin{array}{c|c}\mathrm{diag} (\lambda) &0\\\hline 0&\sigma^2I_{n-d}\end{array}\right)Q: \,
Q\in O(n), \, \lambda \in \mathbb R^d_{>0} \bigg\}. \label{subst}
\end{align}
As outlined in Remark \ref{rem_learn_sigma}, the $\sigma$ can either be fixed a priori, 
or optimized within the EM algorithm, as later outlined in Section \ref{sec:alg-em}, simultaneously with the other parameters.\\

In \cite{TB1999}, Tipping and Bishop propose mixture models of probabilistic PCAs (MPPCA), which are GMMs of the form
\begin{align}\label{dens_GMM}
p(x)=\sum_{k=1}^K\alpha_k f(x_i|\tilde \mu_k,\tilde \Sigma_k),
\end{align}
where 
$$
\tilde\Sigma_k=U_kU_k^\tT+\sigma_k^2I_n,\quad U_k\in\St(d_k,n).
$$
Here, the parameters $\sigma_k$  are optimized simultaneously with the $\alpha_k$ and $U_k$ via the EM algorithm.
Hence, skipping the index, instead of minimizing over \eqref{subst}, they minimize over sets of the form
\begin{align}\label{subst_MPPCA}
\bigg\{Q^\tT 
\left(\begin{array}{c|c}(1+ \sigma^2) I_d&0\\\hline 0&\sigma^2I_{n-d}\end{array}\right)
Q:
Q\in O(n)\bigg\}.
\end{align}
Since this form of the covariance matrices is very restrictive,  Bouveyron,  Girard and Schmid generalized MPPCA in \cite{BGS2006} to a model called high dimensional data clustering (HDDC). 
Again, they minimize a special GMM \eqref{dens_GMM}, but here the covariances are given by
$$
\tilde\Sigma_k=U_k\mathrm{diag}(\lambda_k)U_k^\tT+\sigma_k^2I_n,\quad U_k\in\St(d_k,n),\lambda_k\in\R_{>0}^{d_k}.
$$
As for the MPPCA, the parameters are optimized via the EM algorithm. 
For deriving it, it is important that the parameters $\sigma_k$ are not fixed a priori but are optimized within the EM algorithm. 
Skipping the index, instead of minimizing over \eqref{subst} or \eqref{subst_MPPCA}, this corresponds to a minimization over 
\begin{align}\label{subst_HDDC}
\bigg\{Q^\tT 
\left(\begin{array}{c|c}\mathrm{diag}(\lambda)+\sigma^2 I_d&0\\\hline 0&\sigma^2I_{n-d}\end{array}\right)
Q:
Q\in O(n), 
\lambda\in \mathbb R^d_{>0} \bigg\}.
\end{align}
In contrast to \eqref{subst}, where the diagonal values $\lambda$ are required to be strictly greater than $0$, the diagonal values $\lambda+\sigma^2$ in \eqref{subst_HDDC} are automatically strictly greater than $\sigma^2$.
Consequently, the PCA-GMM model is more general than HDDC.
Note that HDDC model contains the so-called mixture factor analysis \cite{MPB2003} as a special case. Here also the alternating expectation conditional maximization algorithm \cite{MV1997} is applicable \cite{ZP2008}, which is an improved version of the EM algorithm.
\\

Finally, Houdard, Bouveyron and Delon proposed in \cite{HBD2018} a model selection algorithm for the dimensions $d_k$.
For this, they propose a model called HDMI, where the only difference to HDDC is, that $\sigma$ is a priori fixed.
They derive as an intermediate step a corresponding EM algorithm in \cite[Proposition 2]{HBD2018}.
Unfortunately, the M-step only ensures that $\lambda >-\sigma^2  1_d$ and not $\lambda>0$, such that the calculations appear to be not fully correct.
However, the final model selection algorithm again ensures that $\lambda>0$ 
such that this seems not to be a problem in \cite{HBD2018}.

\section{Minimization Algorithm} \label{sec:alg}
We propose to minimize $\mathrm F$ in \eqref{model}
based on the EM algorithm, where we have to take the special structure
of $\tilde \mu_k \in \R^n$ and $\tilde \Sigma_k \in \SPD(n)$ in \eqref{subst_1}
into account to work indeed in the lower $d$-dimensional space.
This requires the solution of a special inner minimization problem within the
M-Step of the EM algorithm. 
We describe the EM algorithm for our PCA-GMM model in Subsection \ref{sec:alg-em}.
In particular, we will see that the M-Step of the algorithm requires the minimization
of functions $G_k(U,b)$, $k=1,\ldots,K$ of the same structure.
We prove that these functions have indeed a global minimizer.
In particular, these functions do not depend on the large number of input data $x_i$, $i=1,\ldots,N$.
Therefore it turns out that the E-step of the algorithm is the most time consuming one.
We propose to find at least a local minimizer of $G$ by the (inertial) Proximal alternating linearized minimization
(PALM) in Subsection \ref{sec:alg-palm} and provide convergence results.

\subsection{EM Algorithm for PCA-GMM} \label{sec:alg-em}
For our setting, we obtain a special EM algorithm described in Algorithm \ref{alg_em_gmm_pca}.
Note that E-Step of Algorithm \ref{alg_em_gmm_pca} requires only the mean 
and covariance matrix in $\vartheta_k^{(r)}$, $k=1,\ldots,K$ with respect to the smaller 
space $\R^d$.

\begin{algorithm}[!ht]
\caption{EM Algorithm for PCA reduced Mixture Models}\label{alg_em_gmm_pca}
\begin{algorithmic}
\State Input: $x=(x_1,...,x_N)\in\R^{n, N}$, 
initialization $\mathbf{U}^{(0)}$, $\mathbf{b}^{(0)}$, $\mathbf{\alpha}^{(0)}$, 
$\mathbf{\vartheta}^{(0)} = (\mathbf{\mu}^{(0)}, \mathbf{\Sigma}^{(0)})$.
\For {$r=0,1,...$}
\State \textbf{E-Step:} For $k=1,...,K$ and $i=1,\ldots,N$ compute 
\begin{align}
\beta_{i,k}^{(r)}
&=
\frac{\alpha_k^{(r)} f(x_i|\tilde \vartheta_k^{(r)}) }
{\sum_{j=1}^K \alpha_j^{(r)} f (x_i|\tilde \vartheta_j^{(r)} )}
\\
&=
\frac{ 
\tfrac{\alpha_k^{(r)}}{(\sigma_k^{(r)})^{n-d}}\exp \left(-\tfrac{1}{2 (\sigma_k^{(r)})^2} \| (I_n - U_k^{(r)} (U_k^{(r)})^\tT ) y_{i,k}^{(r)}\|^2 \right)
f\left( (U_k^{(r)})^\tT y_{i,k}^{(r)}| \vartheta_k^{(r)}\right)
}
{
\sum_{j=1}^K \tfrac{\alpha_j^{(r)}}{(\sigma_j^{(r)})^{n-d}}
\exp \left(-\tfrac{1}{2(\sigma_j^{(r)})^2} \|(I_n-U_j^{(r)}(U_j^{(r)})^\tT )
y_{i,k}^{(r)} \|^2 \right) f\left( (U_j^{(r)})^\tT
y_{i,k}^{(r)}| \vartheta_j^{(r)} \right),
}\\
y_{i,k}^{(r)} &= x_i-b_k^{(r)}.
\end{align}
\State \textbf{M-Step:} For $k=1,...,K$ compute
\begin{align}
\alpha_k^{(r+1)}
& = \frac1N \sum_{i=1}^N \beta_{i,k}^{(r)},\\
(U_k^{(r+1)},b_k^{(r+1)},\sigma_k^{(r+1)},\vartheta_k^{(r+1)})
&=
\argmax_{U,b,\mu,\Sigma}\sum_{i=1}^N\beta_{ik}^{(r)}
\log(f(x_i|\tilde \vartheta_k)) \label{to_solve}\\
&\mathrm{subject \; to} \quad U_k \in \St(d,n), \Sigma_k \in \SPD(d) \\
&\mathrm{with} \; \tilde \vartheta_k =(\tilde \mu_k, \, \tilde \Sigma_k) \; \mathrm{as\;  in} \;\eqref{subst_1}.
\end{align}
\EndFor
\end{algorithmic}
\end{algorithm}

A convergence analysis of the EM algorithm via Kullback-Leibler proximal point algorithms was 
given in \cite{CH2000, CH2008}, see also \cite{Laus2019} for a nice review.
The authors showed that the objective function decreases for the iterates of the algorithm.
Hence we obtain the following corollary.

\begin{corollary}\label{cor:EM}
For the iterates $\left(\mathbf U^{(r)}, \mathbf b^{(r)}, \alpha^{(r)},\mathbf \vartheta^{(r)}\right)_r$  
generated by Algorithm \ref{alg_em_gmm_pca} 
the objective function $\mathrm F$ is decreasing.
\end{corollary}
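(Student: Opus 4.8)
The plan is to reduce the claim to the classical monotone descent property of the EM algorithm, exploiting the reformulation that follows Lemma~\ref{anders}. By that reformulation the objective satisfies
\[
\mathrm F(\mathbf U, \mathbf b, \alpha, \vartheta) = L(\alpha, \tilde{\vartheta}\mid\mathcal X) + (n-d)\log\sqrt{2\pi\sigma^2},
\]
where $\tilde\vartheta_k = (\tilde\mu_k,\tilde\Sigma_k)$ is obtained from $(U_k,b_k,\mu_k,\Sigma_k)$ via \eqref{subst_1}. Since the additive term is a constant independent of the parameters, it suffices to show that $L(\alpha^{(r)},\tilde\vartheta^{(r)}\mid\mathcal X)$ is nonincreasing in $r$. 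First I would check that the responsibilities $\beta_{i,k}^{(r)}$ of the E-step in Algorithm~\ref{alg_em_gmm_pca} are exactly those of the reformulated GMM with parameters $(\alpha^{(r)},\tilde\vartheta^{(r)})$; this is immediate from \eqref{eq_mult}, because the factor $(2\pi\sigma^2)^{(n-d)/2}$ occurs in both numerator and denominator of $\beta_{i,k}^{(r)}$ and cancels.

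Next I would set up the standard EM surrogate. Writing $\theta \coloneqq (\alpha,\tilde\vartheta)$ and applying Jensen's inequality with the weights $(\beta_{i,k}^{(r)})_{k=1}^K\in\Delta_K$ yields the minorant
\[
-L(\theta\mid\mathcal X) = \sum_{i=1}^N \log\Big(\sum_{k=1}^K \alpha_k f(x_i\mid\tilde\vartheta_k)\Big) \;\ge\; \sum_{i=1}^N\sum_{k=1}^K \beta_{i,k}^{(r)}\log\frac{\alpha_k f(x_i\mid\tilde\vartheta_k)}{\beta_{i,k}^{(r)}} \;\eqqcolon\; \mathcal Q(\theta\mid\theta^{(r)}),
\]
with equality at $\theta=\theta^{(r)}$, by the very choice of $\beta_{i,k}^{(r)}$ in the E-step (equivalently, because the associated Kullback--Leibler term vanishes there). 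The M-step maximizes $\sum_i\beta_{i,k}^{(r)}\log f(x_i\mid\tilde\vartheta_k)$ over the feasible parameters, together with the separate update $\alpha_k^{(r+1)} = \tfrac1N\sum_i\beta_{i,k}^{(r)}$; since the $\alpha$- and $\vartheta$-parts of $\mathcal Q$ decouple, this is precisely a maximization of $\mathcal Q(\,\cdot\mid\theta^{(r)})$ over the admissible set. The descent then follows from the chain
\[
L(\theta^{(r+1)}) \le -\mathcal Q(\theta^{(r+1)}\mid\theta^{(r)}) \le -\mathcal Q(\theta^{(r)}\mid\theta^{(r)}) = L(\theta^{(r)}),
\]
which is exactly the statement proved in \cite{CH2000,CH2008} for EM viewed as a Kullback--Leibler proximal point method.

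The one point requiring attention, and which I expect to be the main obstacle, is that the M-step optimizes only over the constrained set $\{U_k\in\St(d,n),\,\Sigma_k\in\SPD(d)\}$ rather than over all admissible GMM parameters, so $\theta^{(r+1)}$ need not maximize the full likelihood. This is resolved by noting that the current iterate $(\mathbf U^{(r)},\mathbf b^{(r)},\mu^{(r)},\Sigma^{(r)})$ is itself feasible for the constrained M-step; hence the constrained maximizer attains a value $\mathcal Q(\theta^{(r+1)}\mid\theta^{(r)}) \ge \mathcal Q(\theta^{(r)}\mid\theta^{(r)})$, which is all that the middle inequality of the chain uses. In particular, global optimality of the M-step is not needed: any feasible update that does not decrease $\mathcal Q$ already guarantees descent, so the conclusion remains valid when the inner problem — the minimization of the functions $G_k$ — is solved only approximately, as will be the case when it is carried out by PALM.
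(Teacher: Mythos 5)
Your proof is correct, and it reaches the same conclusion as the paper by a more self-contained route. The paper's own ``proof'' of Corollary~\ref{cor:EM} is essentially a citation: it invokes the analysis of EM as a Kullback--Leibler proximal point algorithm from \cite{CH2000,CH2008} (see also \cite{Laus2019}), which yields the descent property for EM-type iterations, and applies it to the reformulation $\mathrm F=L(\alpha,\tilde{\vartheta}\mid\mathcal X)+(n-d)\log\sqrt{2\pi\sigma^2}$. You instead reconstruct the classical minorize--maximize argument from scratch: Jensen's inequality produces the surrogate $\mathcal Q(\cdot\mid\theta^{(r)})$ touching $-L$ at $\theta^{(r)}$, the responsibilities of Algorithm~\ref{alg_em_gmm_pca} coincide with those of the reformulated GMM because the factor $(2\pi\sigma^2)^{(n-d)/2}$ cancels, and the chain $L(\theta^{(r+1)})\le-\mathcal Q(\theta^{(r+1)}\mid\theta^{(r)})\le-\mathcal Q(\theta^{(r)}\mid\theta^{(r)})=L(\theta^{(r)})$ closes the argument. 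The mathematical content is identical (the KL term in the proximal-point view of \cite{CH2000} is exactly the gap in your Jensen inequality), but your version makes explicit the one point the paper's citation glosses over: the M-step maximizes only over the constrained class \eqref{subst_1} with $U_k\in\St(d,n)$, $\Sigma_k\in\SPD(d)$, and descent survives because the previous iterate is feasible, so any constrained maximizer (whose existence the paper proves in a separate lemma) satisfies $\mathcal Q(\theta^{(r+1)}\mid\theta^{(r)})\ge\mathcal Q(\theta^{(r)}\mid\theta^{(r)})$. Your closing observation --- that descent persists when the inner minimization of $G_k$ is performed only approximately --- actually goes beyond the paper, which in its conclusions claims the guarantee only for exact M-steps; note, however, that this extension requires PALM to be initialized at the previous iterate $(U_k^{(r)},b_k^{(r)})$ and run as a descent method (as guaranteed by Theorem~\ref{thm:PALM_convergence}), since otherwise there is no reason for the approximate M-step not to decrease $\mathcal Q$.
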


The interesting step is the second M-Step which requires again the maximization of a function.
Based on \eqref{easy} and \eqref{easy2} we can prove the following proposition.

\begin{proposition}\label{mstep}
Assume that $n+1$ of the points $x_i$, $i=1,...,N$ are affinely independent.

Further, let $f$  be the Gaussian density function \eqref{density_normal} and $\beta_i \in \R_{\ge 0}$, $i=1,\ldots,N$.
and let $\sigma^2$ be fixed.
\\
\textrm{i)} For fixed $\sigma^2$, a solution of 
\begin{align}\label{funct}
\argmax_{U,b,\mu,\Sigma}\sum_{i=1}^N \beta_{i}
\log(f(x_i|\tilde \vartheta))
\end{align}
with $\tilde \vartheta = (\tilde \mu,\tilde \Sigma)$ of the form \eqref{eq_mu_new} and  
\eqref{eq_sig_new}
is given by
\begin{align} \label{mu_sig}
\hat \mu = 0, 
&\quad 
\hat \Sigma =  \frac1{\alpha} \hat U^\tT  S   \hat U,\quad\text{and}\quad \hat b=\frac{1}{\alpha}m,
\end{align}
where
\begin{align}
m&=\sum_{i=1}^N \beta_i x_i,\quad 
C=\sum_{i=1}^N \beta_i x_i x_i^\tT,\quad 
\alpha = \sum_{i=1}^N \beta_i,\quad S=C-\frac{1}{\alpha}mm^\tT,
\end{align}
and 
\begin{equation}
\hat U\in \argmin_{U \in \St(d,n)} G(U). \label{tomini}
\end{equation} 
Here
\begin{align} \label{eq_opt_problem_U_b}
G(U) \coloneqq -\frac{1}{\sigma^2}\trace(U^\tT SU)+\alpha\log(|U^\tT SU|).
\end{align}
\textrm{ii)} If $\sigma$ is learned, we have
\begin{align}
\hat\sigma^2=\tfrac{1}{\alpha(n-d)}\left(\trace(S)-\trace(\hat U^\tT S \hat U)\right),
\end{align}
and $G$ from \eqref{eq_opt_problem_U_b} is replaced by
\begin{align} \label{eq_opt_problem_U_b_sigma}
G(U)\coloneqq(n-d)\log\left(\trace(S)-\trace(U^\tT SU)\right)+\log(|U^\tT SU|).
\end{align}
\end{proposition}

Note that $\alpha$ in the proposition is defined in another way than in the first M-step,
more precisely, the factor $\frac{1}{N}$ is skipped.
Before presenting the proof of the proposition, we give the following remark.

\begin{remark}
By definition of $C$ in \ref{mstep} we have that 
\begin{align}
S
&=\sum_{i=1}^N \beta_i(x_i-\tfrac{1}{\alpha}m)(x_i-\tfrac{1}{\alpha}m)^\tT.
\end{align}
Since $n+1$ of the points $x_i$, $i=1,...,N$ are affinely independent, $S$ is symmetric positive definite. In particular, it holds for $G$ from \eqref{eq_opt_problem_U_b} or \eqref{eq_opt_problem_U_b_sigma} that $G(U)>-\infty$ for any $U\in\St(d,n)$. 
Further, since the function $G$ is continuous and the Stiefel manifold is compact, we can conclude, that $G$ has a global minimizer.
\end{remark}

\begin{proof}[Proof of Proposition \ref{mstep}]
 i) Let $\sigma$ be fixed.
Using \eqref{eq_mult}, we have for fixed  $U$ and $b$, as in the classical GMM, see \eqref{easy} and \eqref{easy2}, 
that the maximizer in \eqref{funct} with respect to
$\mu$ and $\Sigma$ fulfills 
\begin{align}
\mu &= \frac{1}{\alpha} \sum_{i=1}^N \beta_i U^\tT(x_i-b) = \frac{1}{\alpha} (U^\tT m - \alpha U^\tT b),\\
\Sigma &= \frac{1}{\alpha} \sum_{i=1}^N \beta_i \left(U^\tT(x_i-b)-\mu\right) \left(U^\tT(x_i-b)-\mu\right)^\tT\\ 
&=\frac{1}{\alpha} \sum_{i=1}^N \beta_i \left(U^\tT(x_i-\frac{1}{\alpha}m)\right) \left(U^\tT(x_i-\frac{1}{\alpha}m)\right)^\tT
=\frac{1}{\alpha}U^\tT S U.
\end{align}
By Lemma \ref{anders}, the negative objective function in \eqref{funct} is given by
\begin{align}\label{eins}
2 \tilde G(U,b) &= 
G_1(U,b) + G_2(U,b) + \alpha \log(|\Sigma|) + \mathrm{const}, \\
G_1(U,b)&=\tfrac{1}{\sigma^2}\sum_{i=1}^N \beta_i (x_i-b)^\tT (I_n-UU^\tT) (x_i-b)+\alpha(n-d)\log(\sigma^2)
\label{g1}\\
G_2(U,b)&= 
\sum_{i=1}^N \beta_i \left(U^\tT x_i - (U^\tT b+ \mu) \right)^\tT \Sigma^{-1}\left(U^\tT x_i - (U^\tT b+ \mu) \right).
\label{g2}
\end{align}
In the following, we use $\mathrm{const}$ as a generic constant which has values
independent of $\mu,\Sigma,U$ and $b$.
The linear trace operator $\trace: \R^{d,d} \to \R$ fulfills
$x^\tT A y = \trace(A x y^\tT)$ and in particular $x^\tT U U^\tT x = \trace(U^\tT x x^\tT U)$.
Using this property we obtain

\begin{align}
G_2(U,b)&= 
 \trace\Big( \Sigma^{-1}\underbrace{\sum_{i=1}^N \beta_i\left(U^\tT x_i - (U^\tT b+ \mu) \right)\left(U^\tT x_i - (U^\tT b+ \mu) \right)^\tT}_{=\Sigma}\Big)=\trace(I).
 \end{align}
 Thus, the only term in \eqref{eins} which depends on $b$ and $U$ is $G_1$. Further, minimizing $G_1$ is equivalent to minimizing
$$
g_1(U,b)\coloneqq\sum_{i=1}^N \beta_i (x_i-b)^\tT (I_n-UU^\tT) (x_i-b).
$$ 
For fixed $U$, we can minimize $g_1$ with respect to $b$ by setting the gradient to $0$. Since $g_1$ is convex in $b$ this is equivalent for being a global minimizer. This yields
 $$
 0=\sum_{i=1}^N\beta_i(I_n-UU^\tT)(b-x_i)
 $$
 which is equivalent to
 $$
 0=(I_n-UU^\tT)(\alpha b-m).
 $$
 In particular, $b=\frac{1}{\alpha}m$ is a global minimizer of $g_1$ resp. $G_1$, and it is independent of $U$. 
Using this, we get
 $$
 \mu=\frac{1}{\alpha}(U^\tT m-\alpha U^\tT b)=0.
 $$
 Minimizing $G_1$ with respect to $U$ for $b=\frac{1}{\alpha}m$ is equivalent to minimizing
 \begin{align} 
 G_1(U,\frac{1}{\alpha}m )
 &=-\tfrac{1}{\sigma^2} \trace\left(U^\tT S U\right) + \mathrm{const}.
 \end{align}
 Further we have 
 $
 \log\left(\left|\tfrac{1}{\alpha}U^\tT SU\right|\right)=\log(|U^\tT SU|)+\mathrm{const}
 $.
 Thus, by combining the above computations, we get that minimizing \eqref{eins} with respect to $U$ is equivalent to minimizing
 \begin{align}
 G(U)&=-\frac{1}{\sigma^2}\trace\left(U^\tT S U\right) +\alpha\log(|U^\tT SU|).
 \end{align}

ii)
Now consider the case, where $\sigma$ is learned.
Again by \eqref{eq_mult}, the maximizer in \eqref{funct} with respect to $\sigma$ is given by the maximizer of 
$$
\sum_{i=1}^N\beta_i\big(-\tfrac1{2\sigma^2}\|(I_n-UU^\tT)(x_i-b)\|^2-(n-d)\log(\sigma)\big).
$$
By setting the derivative to zero, one obtains, that
$$
\sigma^2=\tfrac1{\alpha(n-d)}\sum_{i=1}^N\beta_i(x_i-b)^\tT(I_n-UU^\tT)(x_i-b).
$$
Then the function in \eqref{g1} modifies to
\begin{align}\label{G1}
G_1(U,b)=\alpha(n-d)\log\Big(\sum_{i=1}^N \beta_i (x_i-b)^\tT (I_n-UU^\tT) (x_i-b)\Big)+\mathrm{const}.
\end{align}
Now the monotonicity of the logarithm implies that minimizing $G_1$ is again equivalent to minimizing $g_1$. 
Hence, as in case i) we get
$b=\frac{1}{\alpha}m$ is a global minimizer of $g_1$ resp. $G_1$, and it is independent of $U$. 
Using this, we obtain
$$
\mu=\frac{1}{\alpha}(U^\tT m-\alpha U^\tT b)=0\quad\text{and}\quad\sigma^2=\tfrac{1}{\alpha(n-d)}\left(\trace(S)-\trace(U^\tT S U)\right).
$$
By \eqref{G1}, minimizing $G_1$ with respect to $U$ for $b=\frac{1}{\alpha}m$ is equivalent to minimizing
\begin{align} 
G_1(U,\frac{1}{\alpha}m )
&=\alpha(n-d)\log\left(\trace(S)-\trace(U^\tT SU)\right) + \mathrm{const},
\end{align}
such that minimizing \eqref{eins} with respect to $U$ is equivalent to minimizing
\begin{align}
G(U)&=(n-d)\log\left(\trace(S)-\trace(U^\tT SU)\right)+\log(|U^\tT SU|).
\end{align}

\end{proof}

By Proposition \ref{mstep}, the M-Step of Algorithm \ref{alg_em_gmm_pca}
reduces for $k=1,...,K$ to the computation
\begin{align}
\alpha_k^{(r+1)}
& = 
\frac{1}{N} \sum_{i=1}^N \beta_{i,k}^{(r)},
\\
m_k 
&= 
\sum_{i=1}^N \beta_{i,k} x_i, \; 
C_k = \sum_{i=1}^N \beta_{i,k} x_i x_i^\tT,
\\
(U_k^{(r+1)},b_k^{(r+1)})
&=
\argmin_{U\in \SPD(d,n),b \in \mathbb R^n} G_k(U,b)
\qquad \mathrm{with} \; G_k \; \mathrm{in} \; \eqref{eq_opt_problem_U_b},
\\
\mu_k^{(r+1)}    
&= 
\frac{1}{ N\alpha_k^{(r+1)} } 
\big( U_k^{(r+1)} \big)^\tT 
\left( m_k - N \alpha_k^{(r+1)} b_k^{(r+1)} \right),
\\
S_k &=
 C_k - m_k \left(b_k^{(r+1)} \right)^\tT 
- b_k^{(r+1)} m_k^\tT + N \alpha_k^{(r+1)} b_k^{(r+1)} 
\left(b_k^{(r+1)}\right)^\tT\\
\Sigma_k^{(r+1)} &= 
\frac{1}{N\alpha_k^{(r+1)} } \big(U_k^{(r+1)} \big)^\tT
\,
S_k \,
U_k^{(r+1)} 
\end{align}
Note that the large data set $\mathcal X$ is involved in the computation of $m_k$ and $C_k$, but it does not influence 
the computational time for minimizing the
$G_k$, $k=1,\ldots,K$. Indeed, the E-Step of Algorithm \ref{alg_em_gmm_pca}
will be the most time consuming one. 

\subsection{PALM for Minimizing $G$} \label{sec:alg-palm}

To minimize $G$ in \eqref{eq_opt_problem_U_b}
we propose to use the Proximal alternating linearized minimization
(PALM) \cite{BST2014}, resp. its accelerated version iPALM \cite{PS2016},
where the 'i' stands for inertial.
As a special case the PALM algorithm can be applied to functions of the form 
\begin{align}
F(x) = H(x)+f(x)\label{eq_PALM_general}
\end{align}
where $H\in C^1(\R^{d})$ and a
lower semi-continuous function 
$f\colon \R^{d} \to (-\infty,\infty]$. 
It is based on the computation of so-called proximal operators.
For a proper and lower semi-continuous function $f\colon\R^d\to (-\infty,\infty]$ 
and $\tau>0$ the \emph{proximal mapping} $\prox_\tau^f\colon\R^d\to\mathcal{P}(\R^d)$ is defined by
$$
\prox_\tau^f(x)=\argmin_{y\in\R^d} \left\{ {\tfrac{\tau}2\|x-y\|^2+f(y)} \right\},
$$
where $\mathcal{P}(\R^d)$ denotes the power set of $\R^d$.

Starting with an arbitrary $x^{(0)}$ PALM performs the iterations
\begin{align}\label{iteration_scheme}
x^{(r+1)}&\in \prox_{\tau^{(r)}}^f \left(x^{(r)}-\tfrac{1}{\tau^{(r)}}\nabla H(x^{(r)}) \right)
\end{align}
Further, iPALM is detailed in Algorithm \ref{iPALM}. Indeed, we have applied the iPALM algorithm in our numerical examples.
However, although we observed convergence of the iterates numerically, we have not proved
convergence theoretically so far.
Alternatively, we could apply the PALM algorithm which is slightly slower. 
Note again, that the E-Step of the algorithm is the most time consuming one.

\begin{algorithm}[!ht]
\caption{iPALM}\label{iPALM}
\begin{algorithmic}
\State Input: $\alpha^{(r)},\beta^{(r)}$
initialization $x^{(1)}$, $x^{(0)}$
\For {$r=1,2,...$} until a convergence criterion is reached
\State
\begin{align}
y^{(r)}&=x^{(r)}+\alpha^{(r)}(x^{(r)}-x^{(r-1)}),\\
z^{(r)}&=x^{(r)}+\beta^{(r)}(x^{(r)}-x^{(r-1)}),\\
x^{(r+1)}&\in \prox_{\tau^{(r)}}^f(y^{(r)}-\tfrac{1}{\tau^{(r)}}\nabla H(z^{(r)})),\\
\end{align}
\EndFor
\end{algorithmic}
\end{algorithm}

In the following, we give details on PALM for our setting.
For our problem \eqref{tomini}, we choose $f(U) \coloneqq \iota_{\St(d,n)}$ and
\begin{equation} \label{H_neu}
H(U) \coloneqq G(U)\eta(\|I_d-U^TU\|_F^2),
\end{equation}
where 
$$
\eta(x) \coloneqq
\begin{cases}
1,&$if $x\in (-\rho,\rho),\\
\exp(-\frac{\rho}{\rho-(|x|-\rho)^2}),&$if $x\in (-2\rho,-\rho]\cup[\rho,2\rho),\\
0, &$otherwise.$
\end{cases}
$$
is a smooth cutoff function of the interval $(-\rho,\rho)$ for some $\rho>0$.
Then, the iteration scheme reads as
\begin{align} 
U^{(r+1)}&\in \Pi_{\St(d,n)}(U^{(r)}-\tfrac{1}{\tau^{(r)}}\nabla H(U^{(r)}))\label{PALM_eins}
\end{align}
where $\Pi_{\St(d,n)}$ denotes the orthogonal projection onto the Stiefel manifold. 

\begin{remark} (Projection onto Stiefel manifolds)
Concerning this orthogonal projection, 
it is well known \cite{H1989}, that for a matrix $A\in\R^{n, d}$, 
the projection $\Pi_{\St(d,n)}(A)$ is given by the orthonormal polar factor $W$ from the polar decomposition 
$$
A=W M,\quad W\in\St(d,n),\quad M\in\SPD(d).
$$
Further, this orthonormal polar factor can be computed by $W=UV$, where $A=U\Sigma V$ is the singular value decomposition of $A$, 
see \cite{H1989}.
The authors of \cite{HS1990} propose to use the so-called Schulz-iteration
$$
X_{k+1}=X_k(I+\tfrac12(I-X_k^\tT X_k))
$$
with $X_0=A$ for computing the orthonormal polar factor of a full rank matrix $A$.
Unfortunately, the convergence of this iteration requires that $\|I-A^\tT A\|_F<1$,
which is usually not fulfilled in our case.
\end{remark}

Note that for any $r\in \mathbb N$, the matrix $U^{(r)}$ belongs to the Stiefel manifold, 
such that $\eta(\|I_d-U^\tT U\|_F)=1$ in a neighborhood of $U^{(r)}$. 
Thus, we can replace the gradient with respect to $H$ by the gradient with respect to $G$ in \eqref{PALM_eins}.
Then the iteration scheme reads as
\begin{align} 
U^{(r+1)}&\in P_{\St(d,n)}(U^{(r)}-\tfrac{1}{\tau^{(r)}}\nabla G(U^{(r)})), \label{PALM_drei}\\
\end{align}
In particular, we do not need to choose the $\rho$ explicitly within our algorithm.

To show convergence of the algorithm, we need the following two lemmas.

\begin{lemma}\label{lemma_1}     
Let $H$ be defined by \eqref{H_neu}.
Then the function $\nabla H$ is globally Lipschitz continuous.
\end{lemma}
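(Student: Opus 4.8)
The plan is to handle the two partial gradients separately, making essential use of the fact that the cutoff factor $\eta(\|I_d-U^\tT U\|_F^2)$ in \eqref{H_neu} depends on $U$ alone. Throughout I would assume, as in the remark preceding this lemma, that $S\succ0$ (for the fixed $b$ in the first part, and for every $b$ in the second, which is guaranteed by the affine-independence genericity condition stated there) and $\alpha>0$.

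For $\nabla_U H(\cdot,b)$ with $b$ fixed, I would first argue that $H(\cdot,b)$ has compact support. Since $\eta$ vanishes outside $(-2\epsilon,2\epsilon)$, the support is contained in $K=\{U\in\R^{n,d}:\|I_d-U^\tT U\|_F^2\le 2\epsilon\}$, and $\|I_d-U^\tT U\|_F\le\sqrt{2\epsilon}$ confines the eigenvalues of $U^\tT U$ to $[1-\sqrt{2\epsilon},1+\sqrt{2\epsilon}]$, hence bounds $\|U\|_F^2=\trace(U^\tT U)$; so $K$ is compact. Choosing $\epsilon<\tfrac12$, every $U\in K$ has full column rank, whence $U^\tT S U\succ0$ on a neighbourhood of $K$ and $G(\cdot,b)$ in \eqref{eq_opt_problem_U_b} is $C^\infty$ there (trace, inversion, determinant and logarithm being smooth on positive definite matrices). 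Multiplying the smooth bump $\eta(\|I_d-\cdot^\tT\cdot\|_F^2)$ by this locally smooth $G$ and extending by zero yields $H(\cdot,b)\in C^\infty(\R^{n,d})$ with compact support. Its Hessian $\nabla_U^2 H(\cdot,b)$ is then continuous with compact support, hence bounded, and a $C^1$ map with bounded derivative is globally Lipschitz.

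For $\nabla_b H(U,\cdot)$ with $U$ fixed, the decisive observation is that $c_U\coloneqq\eta(\|I_d-U^\tT U\|_F^2)$ is constant in $b$, so $\nabla_b H(U,\cdot)=c_U\,\nabla_b G(U,\cdot)$; if $c_U=0$ this vanishes, and otherwise $U$ is full rank and it remains to show $\nabla_b G(U,\cdot)$ is globally Lipschitz. Here the cutoff provides no help, so the control must come from the algebra of $G$. I would substitute $w=U^\tT b$, $u=w-\bar z$ with $\bar z=U^\tT m/\alpha$, and use the parallel-axis identity $U^\tT S U=A_0+\alpha\,uu^\tT$, where $A_0=U^\tT\bar S U\succ0$ is the projection of the centred scatter matrix $\bar S=\sum_i\beta_i(x_i-\bar x)(x_i-\bar x)^\tT$, $\bar x=m/\alpha$, together with $U^\tT d=-\alpha u$. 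Then Sherman--Morrison collapses the inverse term to $d^\tT U(U^\tT S U)^{-1}U^\tT d=\alpha^2 q/(1+\alpha q)$ and the matrix determinant lemma collapses the log-term to $\alpha\log|U^\tT S U|=\alpha\log|A_0|+\alpha\log(1+\alpha q)$, with $q\coloneqq u^\tT A_0^{-1}u$. Both are smooth in $u$ since $1+\alpha q\ge1$, and differentiating shows their gradients decay like $\|u\|^{-1}$ and their Hessians like $\|u\|^{-2}$ as $\|u\|\to\infty$; hence the Hessians are bounded on all of $\R^n$. The remaining terms $\trace(U^\tT S U)$ and $\tfrac1\alpha\|d\|^2$ are quadratic in $b$ with affine $b$-gradients, so $\nabla_b G(U,\cdot)$ has a bounded Hessian and is globally Lipschitz, and so is $\nabla_b H(U,\cdot)=c_U\,\nabla_b G(U,\cdot)$.

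I expect the main obstacle to be exactly the $b$-direction: there the cutoff is inert and the log-determinant term is genuinely unbounded as $\|b\|\to\infty$, so one cannot invoke compact support as in the $U$-direction. The key is that unboundedness of a function does not preclude Lipschitz continuity of its gradient, and the parallel-axis reduction to the one-parameter family $A_0+\alpha uu^\tT$ is what makes the decay of the gradient and Hessian transparent. A secondary point to keep track of is the positive definiteness of $S$: the $U$-step needs it only at the fixed $b$, while the $b$-step needs $S(b)\succ0$ for all $b$ and therefore the stronger affine-independence assumption recorded in the remark.
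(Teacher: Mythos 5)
Your proposal is correct. For $\nabla_U H(\cdot,b)$ your argument is the same as the paper's: the cutoff makes $H(\cdot,b)$ a compactly supported $C^2$ function, hence its Hessian is bounded and the gradient is globally Lipschitz (your extra details --- taking $\epsilon<\tfrac12$ so that $U$ has full column rank on the support and $U^\tT S U\succ 0$ there --- are exactly what makes this rigorous). For $\nabla_b H(U,\cdot)$, however, you take a genuinely different route, and it is the stronger one. The paper computes $\nabla_b G$ as $\tfrac12\left(\nabla_b G_1+\nabla_b G_2\right)$ from \eqref{g1}--\eqref{g2} with $\mu$ and $\Sigma$ treated as constants, and concludes that the gradient is a linear (affine) function of $b$. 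But in $G$ as defined in \eqref{eq_opt_problem_U_b} the eliminated parameters have been substituted as functions of $b$: in particular $\Sigma=\tfrac1\alpha U^\tT S U$ with $S=S(b)$ the scatter around $b$ rather than around $m/\alpha$, so it is not the exact minimizer in $\Sigma$, and its $b$-dependence enters through $(U^\tT S U)^{-1}$ and $\log\abs{U^\tT S U}$. The chain-rule terms the paper drops therefore do not vanish (an envelope-theorem defense fails for the same reason), and the true $\nabla_b G(U,\cdot)$ is not affine. Your Sherman--Morrison/matrix-determinant-lemma reduction isolates exactly these contributions, $\alpha^2 q/(1+\alpha q)$ and $\alpha\log(1+\alpha q)$ with $q=u^\tT A_0^{-1}u$, and your decay estimates show they have globally bounded Hessians, so that $\nabla_b G(U,\cdot)$ is globally Lipschitz even though it is nonlinear. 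Thus your argument establishes the lemma for the objective \eqref{eq_opt_problem_U_b} to which PALM is actually applied, whereas the paper's $b$-computation is valid only for the variant of $G$ built from the centered scatter matrix, where those extra terms are absent; the conclusion of the lemma is the same either way, but your route closes a real gap in the paper's argument.
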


\begin{proof}
The function $H$ is twice continuously differentiable and zero outside of a compact set. 
Hence the second order derivative is bounded and $\nabla_U H(\cdot,b)$ is globally Lipschitz continuous.
\end{proof}

Further, let us recall the notation of Kurdyka-{\L}ojasiewicz functions.
For $\delta\in(0,\infty]$, we denote by $\Phi_\delta$ the set of all concave continuous functions $\phi\colon[0,\delta)\to\R_{\geq 0}$ which fulfill the following properties:
\begin{enumerate}[(i)]
\item $\phi(0)=0$.
\item $\phi$ is continuously differentiable on $(0,\delta)$.
\item For all $s\in(0,\delta)$ it holds $\phi'(s)>0$.
\end{enumerate}

For a proper and lower semicontinuous function $\gamma\colon\R^d\to(-\infty,+\infty]$ denote by $\partial \gamma$ the subdifferntial of $\gamma$.

\begin{definition}[Kurdyka-{\L}ojasiewicz property]
A  proper, lower semicontinuous function $\gamma\colon\R^d\to(-\infty,+\infty]$
has the Kurdyka-{\L}ojasieweicz (KL) property at $\bar u\in\mathrm{dom}\,\partial\gamma=\{u\in\R^d:\partial\gamma\neq\emptyset\}$
if there exist $\delta\in(0,\infty]$, a neighborhood $U$ of $\bar u$ and a function $\phi\in\Phi_\delta$, such that for all
$$
u\in U\cap\{v\in\R^d:\gamma(\bar u) < \gamma(v) < \gamma(\bar u)+\delta\},
$$
it holds
$$
\phi'(\gamma(u)-\gamma(\bar u))\mathrm{dist}\,(0,\partial\gamma(u))\geq 1.
$$
We say that $\gamma$ is a KL function, if it satisfies the KL property in each point $u\in\mathrm{dom}\,\partial\gamma$.
\end{definition}

\begin{lemma} \label{lemma_2}
The function $H$ defined in \eqref{H_neu} is a KL function.
\end{lemma}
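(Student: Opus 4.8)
The plan is to invoke the fundamental result that any proper, lower semicontinuous function which is definable in an o-minimal structure over $\R$ satisfies the Kurdyka-{\L}ojasiewicz property at every point of its domain (Bolte, Daniilidis, Lewis and Shiota; see also Attouch, Bolte, Redont and Soubeyran). Since $H$ is continuous and real-valued, properness and lower semicontinuity are immediate, so the entire task reduces to exhibiting a single o-minimal structure in which $H$ is definable. The presence of the exponential cutoff $\eta$ and of the term $\alpha\log(|U^\tT S U|)$ in $G$ means that the semi-algebraic structure does \emph{not} suffice; instead I would work in the log-exp structure $\R_{\exp}$, which is o-minimal by Wilkie's theorem (van den Dries and Miller).

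First I would verify that $G$ from \eqref{eq_opt_problem_U_b} is definable on the region where $U^\tT S U$ is invertible. Expressed in the entries of $U$ and $b$, the matrix $S = C - m b^\tT - b m^\tT + \alpha b b^\tT$ has polynomial entries in $b$, so that $\trace(U^\tT S U)$, $\|d\|^2$ with $d = m-\alpha b$, and the determinant $|U^\tT S U|$ are polynomials in the entries of $(U,b)$. By Cramer's rule the inverse $(U^\tT S U)^{-1}$ is a matrix of rational functions of these entries, hence semi-algebraic, and therefore the quadratic form $d^\tT U (U^\tT S U)^{-1} U^\tT d$ is semi-algebraic as well. Composing the polynomial $|U^\tT S U|$ with the definable logarithm and adding the semi-algebraic contributions shows that $G$ is definable in $\R_{\exp}$.

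Next I would treat the factor $\eta(\|I_d-U^\tT U\|_F^2)$. Its argument is a polynomial in the entries of $U$, while $\eta$ is defined piecewise over the semi-algebraic sets $(-\epsilon,\epsilon)$, $(-2\epsilon,-\epsilon]\cup[\epsilon,2\epsilon)$ and the complement of $(-2\epsilon,2\epsilon)$, equalling on the middle piece the function $\exp(-\tfrac{\epsilon}{\epsilon-(|x|-\epsilon)^2})$, a composition of $\exp$, the absolute value and a rational function. Thus $\eta$, and with it $\eta(\|I_d-U^\tT U\|_F^2)$, is definable in $\R_{\exp}$. Since finite sums, products and compositions of definable functions remain definable, the product $H = G\,\eta(\|I_d-U^\tT U\|_F^2)$ is definable, and the cited theorem yields the KL property.

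The step requiring the most care is the domain of $G$, which is smooth and real-valued only where $U^\tT S U$ is positive definite; this is exactly the reason the cutoff $\eta$ is introduced. For $\epsilon$ small enough the support of $U\mapsto\eta(\|I_d-U^\tT U\|_F^2)$ lies in a neighborhood of the Stiefel manifold on which $U$ has full column rank, so that $U^\tT S U$ stays in $\SPD(d)$ whenever $S$ is symmetric positive definite. Outside this support $H$ is set to zero, and as in Lemma \ref{lemma_1} all derivatives of $\eta$ vanish there, so $H$ extends smoothly by zero and is globally well defined. The genuine obstacle is then merely to confirm that the piecewise and composite constructions above all remain within one fixed o-minimal structure, which $\R_{\exp}$ guarantees.
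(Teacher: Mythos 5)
Your proof is correct, but it follows a genuinely different route from the paper. The paper's own proof is one line: it asserts that $G_1$, $G_2$ and $\eta$ are sums, products, quotients and compositions of real analytic functions, concludes that $H$ is real analytic, and invokes the \L{}ojasiewicz inequality for real analytic functions (via \cite{AB2009, L1963, L1993}). You instead establish definability of $H$ in the o-minimal structure $\R_{\exp}$ and invoke the Bolte--Daniilidis--Lewis--Shiota theorem that proper lower semicontinuous definable functions are KL. These are not the same argument, and yours is in fact the more defensible one: a smooth cutoff such as $\eta$ is compactly supported and locally constant on open sets, so it cannot be real analytic (a real analytic function on the connected space $\R^{n,d}\times\R^n$ that vanishes on an open set vanishes identically); the paper's appeal to analyticity therefore breaks down exactly at the piecewise gluing of $\eta$, which is the point your argument handles cleanly, since o-minimal structures are closed under piecewise definitions over definable sets while analyticity is not. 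What the paper's route buys is brevity, and it would be legitimate for the analytic pieces of $H$ away from the transition set $\{\,\|I_d-U^\tT U\|_F^2\in\{\epsilon,2\epsilon\}\,\}$; what your route buys is a proof valid on all of $\R^{n,d}\times\R^n$, including the gluing locus, together with a clean treatment of the $\log\det$ and $\exp$ terms (which, as you note, already rule out the purely semi-algebraic structure). Your care about the domain issue --- that for small $\epsilon$ the support of the cutoff stays in a full-rank neighborhood of $\St(d,n)$ where $U^\tT S U\in\SPD(d)$, so that the product defining $H$ extends by zero --- matches the implicit assumption the paper makes (cf.\ Lemma \ref{lemma_1} and the remark that $S$ is positive definite for generic data), so no gap arises there.
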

\begin{proof}
The functions $G$ and $\eta$ are sums, products, quotients and concatenations of real analytic functions. Thus, also $H$ is a real analytic function. This implies that it is a KL function, see \cite[Remark 5]{AB2009} and \cite{L1963, L1993}.
\end{proof}

The following theorem follows directly from \cite[Lemma 3, Theorem 1]{BST2014}. 

\begin{theorem}[Convergence of PALM] \label{thm:PALM_convergence}
Let $F\colon \R^{d}\to(-\infty,\infty]$ be given by \eqref{eq_PALM_general} and let $\nabla H$ be globally $L$-Lipschitz continuous.
Let $(x^{(r)})_r$ be the sequence generated by PALM, 
where the step size parameters fulfill 
$$\tau^{(r)}  \ge  \gamma L$$
for some $\gamma >1$. 
Then, for $\eta \coloneqq (\gamma-1)L$,
 the sequence $(F(x^{(r)}))_r$ is nonincreasing and 
$$
\tfrac{\eta}{2}\|x^{(r+1)}-x^{(r)})\big\|_2^2 \leq F(x{(r)})-F(x^{(r+1)}).
$$
If $F$ is in addition a KL function and the sequence $(x^{(r)})_r$ is bounded, then it converges to a critical point of $F$.
\end{theorem}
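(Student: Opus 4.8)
The plan is to follow the abstract convergence framework of Bolte, Sabach and Teboulle, since the statement is exactly \cite[Lemma 3, Theorem 1]{BST2014}; I would reproduce the three ingredients that drive their analysis, namely a sufficient decrease property, a subgradient (relative error) bound, and the KL based finite length argument. The hypotheses we can invoke are precisely tailored to this: $\nabla H$ is Lipschitz on bounded sets, the partial Lipschitz constants $L_1,L_2$ obey the uniform bounds of Assumption \ref{ass:PALM2}, and for the second part $F$ is a KL function with bounded iterates.

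First I would establish the displayed sufficient decrease inequality. The tool is the descent lemma applied blockwise: since $\nabla_{x_1} H(\cdot,x_2^{(r)})$ is $L_1(x_2^{(r)})$-Lipschitz,
$$
H(x_1^{(r+1)},x_2^{(r)}) \le H(x_1^{(r)},x_2^{(r)}) + \inner{\nabla_{x_1}H(x_1^{(r)},x_2^{(r)})}{x_1^{(r+1)}-x_1^{(r)}} + \tfrac{L_1(x_2^{(r)})}{2}\|x_1^{(r+1)}-x_1^{(r)}\|^2 .
$$
On the other hand, the proximal step in \eqref{iteration_scheme} means $x_1^{(r+1)}$ minimizes $y\mapsto \tfrac{\tau_1^{(r)}}{2}\|y-x_1^{(r)}\|^2 + \inner{\nabla_{x_1}H(x_1^{(r)},x_2^{(r)})}{y-x_1^{(r)}} + f(y)$, so comparing its value at $x_1^{(r+1)}$ with the value at $x_1^{(r)}$ gives
$$
f(x_1^{(r+1)}) + \inner{\nabla_{x_1}H(x_1^{(r)},x_2^{(r)})}{x_1^{(r+1)}-x_1^{(r)}} + \tfrac{\tau_1^{(r)}}{2}\|x_1^{(r+1)}-x_1^{(r)}\|^2 \le f(x_1^{(r)}) .
$$
Adding the two and using $\tau_1^{(r)}\ge\gamma_1 L_1(x_2^{(r)})$ with $\gamma_1>1$ yields a decrease of $F$ of at least $\tfrac{(\gamma_1-1)\lambda_1^-}{2}\|x_1^{(r+1)}-x_1^{(r)}\|^2$; the analogous computation for the $x_2$-block with $\tau_2^{(r)}\ge\gamma_2 L_2(x_1^{(r+1)})$ supplies the matching term. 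Summing the two estimates and inserting $\eta=\min\{(\gamma_1-1)\lambda_1^-,(\gamma_2-1)\lambda_2^-\}$ gives both monotonicity and the claimed inequality.

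Second, since the iterates are bounded and $F$ is lower semicontinuous, $F$ is bounded below along the sequence, so telescoping the decrease estimate shows $\sum_r \|(x_1^{(r+1)},x_2^{(r+1)})-(x_1^{(r)},x_2^{(r)})\|^2<\infty$; in particular the successive differences tend to zero. I would then read off the subgradient bound from the optimality conditions of the two proximal steps: writing the inclusions and substituting the gradient differences, the Lipschitz property of $\nabla H$ on bounded sets produces $w^{(r)}\in\partial F(x_1^{(r)},x_2^{(r)})$ with $\|w^{(r)}\|\le C\big(\|x_1^{(r)}-x_1^{(r-1)}\|+\|x_2^{(r)}-x_2^{(r-1)}\|\big)$.

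Finally, boundedness gives a nonempty compact cluster set on which $F$ is constant, and since $F$ is a KL function one feeds the sufficient decrease and the subgradient bound into the Kurdyka-{\L}ojasiewicz inequality via the concave reshaping function $\phi$. The standard Attouch-Bolte-Svaiter telescoping then upgrades the square-summability to the finite length property $\sum_r \|(x_1^{(r+1)},x_2^{(r+1)})-(x_1^{(r)},x_2^{(r)})\|<\infty$, so the sequence is Cauchy and converges; the subgradient bound together with closedness of $\partial F$ forces the limit to be critical. I expect this last step to be the main obstacle: converting the pointwise KL inequality into a finite length bound requires a uniform version of the KL property over the whole cluster set rather than at a single point, and the careful combination of $\phi$ with the descent and error inequalities over a tail of the iteration is the delicate part, where I would rely most directly on the arguments of \cite{BST2014}.
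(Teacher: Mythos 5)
Your proposal is correct and follows essentially the same route as the paper, which does not prove this theorem itself but imports it verbatim by citing \cite[Lemma 3, Theorem 1]{BST2014}: your three ingredients (blockwise descent lemma plus prox-value comparison for sufficient decrease, the prox optimality conditions plus Lipschitz continuity of $\nabla H$ on bounded sets for the relative-error subgradient bound, and the uniformized KL/finite-length argument) are exactly the structure of the Bolte--Sabach--Teboulle proof. Your closing caveat is also accurate --- the uniform KL property over the compact cluster set is precisely what \cite{BST2014} supplies to finish the argument.
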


By Lemma \ref{lemma_1} and \ref{lemma_2} and the fact that $G$ coincides with $H$ in a neighborhood of the Stiefel manifold we obtain the following corollary. 

\begin{corollary}\label{cor:EM-M}
Let $(U^{(r)})_r$ be generated by \eqref{PALM_drei} 
with $\tau^{(r)}\geq \gamma L$, 
where $L$ is the Lipschitz constant of $\nabla H$ and $\gamma>1$.
Consider the sequence generated by PALM with \eqref{PALM_drei}.
Then, the sequence $(G(U^{(r)}))_r$ is monotone decreasing and the sequence $(U^{(r)})_r$ converges to a critical point of $G$.
\end{corollary}

\section{Superresolution} \label{sec:super}
In this section, we adapt the superresolution method proposed by Sandeep and Jacob \cite{SJ2016} to our 
PCA-GMM model.
The method works in two steps.

\paragraph{1. Learning the PCA-GMM}
For given low resolution patches $x_{L,i}\in\R^{\tau^2}$ 
of an image and their higher resolution counterparts 
$x_{H,i}\in\R^{q^2\tau^2}$, $q \in \mathbb N$, $q>2$,
$i=1,\ldots,N$
we learn a PCA-GMM based on the data 
$x_i = \left(\begin{array}{c}x_{H,i}\\ x_{L,i}\end{array}\right) \in \R^n$, where 
$n= (q^2+1)\tau^2$,
by Algorithm \ref{alg_em_gmm_pca}.
This  provides us with parameters
$(\mathbf U, \mathbf b, \alpha,\mathbf \mu,\mathbf \Sigma)$
of the reduced $d$-dimensional GMM.
Using these parameters, we compute the parameters of the corresponding
high-dimensional mixture model $(\alpha,\tilde \mu_k,\tilde \Sigma_k)$, $k=1,\ldots,K$,
where $\mu_k$ and $\Sigma_k$ are defined as in \eqref{eq_mu_new} and \eqref{eq_sig_new}.
In the following, we use the notations 
$\tilde \mu_k=\left(\begin{array}{c}
\tilde \mu_{H,k}\\
\tilde \mu_{L,k}
\end{array}\right)$ 
and 
$\tilde \Sigma_k
=
\left(\begin{array}{cc}
\tilde \Sigma_{H,k}&\tilde \Sigma_{HL,k}\\
(\tilde \Sigma_{HL,k})^\tT&\tilde \Sigma_{L,k}\end{array}\right)$.

\paragraph{2. Estimation of high resolution patches by MMSE}
Now we want to improve the resolution of a given low resolution patch $x_L \in \R^{\tau^2}$.
First, we select the component $k^*$, such that the likelihood that $x_L$ belongs to the $k^*$-th component is maximal, 
i.e., we compute
$$
k^* = \argmax_{k=1,...,K} \alpha_k f(x_L|\tilde \mu_{L,k},\tilde \Sigma_{L,k}).
$$
Then we estimate the high resolution patch $x_H \in\R^{q^2\tau^2}$ 
as the minimum mean-square estimator (MMSE). The following remark briefly review this estimator.

\begin{remark} (MMSE)
Given a random variable $Y: \Omega \rightarrow \mathbb R^d$ 
in a probability space $(\Omega, \mathcal A, \mathbb P)$, 
we wish to estimate a random variable $X:\Omega \rightarrow \mathbb R^d$, 
i.e., we seek an estimator $T\colon \R^d\to \R^d$ such that
$\hat{X} = T(Y)$ approximates $X$. 
A common quality measure for this task is the \emph{mean square error} 
$\E\norm{X-T(Y)}_{2}^2$, 
which gives rise to the definition of the \emph{minimum mean square estimator} 
\begin{equation} \label{mmse_1}
T_{\text{MMSE}}\in \argmin_{T}\E\|X-T(Y)\|_2^2.
\end{equation}
Under weak additional regularity assumptions on the estimator $T$, 
the Lehmann-Scheff\'e theorem~\cite{LS50}
states that the general solution of the minimization problem \eqref{mmse_1} is given by  
$$T_{\text{MMSE}}(Y)= \E(X|Y). $$ 
In general, it is not possible to give an analytical expression of the MMSE estimator 
$T_{\text{MMSE}}$. 
An exception are Gaussian random variables:
if $X$ and $Y$ are jointly normally distributed, i.e.,
\begin{equation*}
\begin{pmatrix}
X\\Y
\end{pmatrix}
\sim \mathcal N\Biggl(\begin{pmatrix} \mu_X\\ \mu_Y
\end{pmatrix},
\begin{pmatrix}
\Sigma_X & \Sigma_{XY}\\
\Sigma_{YX} & \Sigma_Y
\end{pmatrix}\Biggr),
\end{equation*}
then
the conditional distribution of $X$ given $Y=a$ is normally as well and reads as
\begin{equation*}
(X|Y=a) \sim \mathcal N\bigl(\mu_{X|Y},\Sigma_{X|Y}  \bigr), 
\end{equation*}
where
\begin{equation}
\mu_{X|Y} = \mu_X + \Sigma_{XY} \Sigma^{-1}_Y (a-\mu_Y),\qquad \Sigma_{X|Y}= \Sigma_X-\Sigma_{XY}\Sigma^{-1}_Y\Sigma_{YX}.\label{eq_MMSE_mean_cov}
\end{equation}
As a consequence we obtain for normally distributed random variables the MMSE estimator 
\begin{equation} \label{mmse_est}
T_{\mathrm{MMSE}}(Y)= \E(X|Y) = \mu_X + \Sigma_{XY} \Sigma^{-1}_Y (Y-\mu_Y) . \qquad
\end{equation}
\end{remark}

In our superresolution task, we assume that the vector 
$\left(\begin{array}{c}x_H\\x_L\end{array}\right)$ 
is a realization of a random variable
${\left(\begin{array}{c}
X_H\\X_L\end{array}\right)
\sim \mathcal N(\tilde \mu_{k^*},\tilde \Sigma_{k^*})}
$. 
Then, by \eqref{mmse_est}, the MMSE can be computed as
\begin{equation} \label{mmse_super}
x_H = \tilde\mu_{H,k^*} + \tilde\Sigma_{HL,k^*} (\tilde\Sigma_{L,k^*} )^{-1}(x_L-\tilde\mu_{L,k^*}).
\end{equation}

\paragraph{3. Reconstruction of the high resolution image by patch averaging}

We estimate for any patch in the low resolution image the corresponding high resolution patch.
Once, we have estimated the high resolution patches, we compute an estimate of the high resolution image in the following way:

Let $x_H=(x_{k,l})_{k,l=1}^{q\tau}\in\R^{q\tau,q\tau}$ be a two-dimensional high resolution patch. Now, we assign to each pixel $x_{k,l}$ the weight 
$$
w_{k,l}\coloneqq\exp\Big(-\tfrac{\gamma}{2} \big((k-\tfrac{q\tau+1}{2})^2+(l-\tfrac{q\tau+1}{2})^2\big)\Big).
$$
After that, we add up for each pixel in the high resolution image the corresponding weighted pixel values and normalize the result by dividing by the sum of the weights.

\section{Numerical Results} \label{sec:num}
In this section, we demonstrate the performance of our algorithm by two- and three-dimensional examples,
where we mainly focus on material data which provided the original motivation for this  work.
More precisely, in the frame of the ITN MUMMERING, 
a series of multi-scale 3D images has been acquired by synchrotron micro-tomography at the SLS beamline TOMCAT.
Materials of two samples were selected to provide 3D images having diverse levels of complexity:
\begin{itemize}
\item[-]
 The first one is a sample of Fontainebleau sandstone ("FS"), 
a natural rock rather homogeneous and commonly used in the oil industry for flow experiments. 
\item[-] The second one is a composite ("SiC Diamonds") obtained by microwave sintering of silicon and diamonds, 
see \cite{vaucher2007line}.
\end{itemize}
Sections of the corresponding 3D images are given the first two columns of Figure \ref{fig_learn_ims}.

All implementations were done in Python and Tensorflow and they can be  parallelized on a GPU. 
We run all our experiments on a Lenovo ThinkStation with Intel i7-8700 6-Core processor with 32GB RAM and NVIDIA GeForce GTX-2060 Super GPU. 
The code is available online\footnote{\url{https://github.com/johertrich/PCA_GMMs}}.

For the implementation of PALM and iPALM, 
we use the implementation framework from \cite{HS2020}\footnote{\url{https://github.com/johertrich/Inertial-Stochastic-PALM}}. 
As suggested in \cite{PS2016} we set the extrapolation factors $\gamma_1^{(r)}=\gamma_2^{(r)}=\frac{r-1}{r+2}$ and choose $\tau_1^{(r)}=\frac{1}{\tilde L_1(b^{(r)}}$ and $\tau_2^{(r)}=\frac{1}{\tilde L_2(U^{(r+1)})}$, where $\tilde L_1(b^{(r)})$ and $\tilde L_2(U^{(r+1)})$ are estimates of the Lipschitz constant of $\nabla_U G(\cdot,b^{(r)})$ and $\nabla_b G(U^{(r+1)},\cdot)$.\\
We generate pairs of high and low resolution images using the following superresolution operator:

\paragraph{Generation of the test examples.}
For convenience, we describe the generation in 2D. The 3D setting is treated in a similar way.
We use the operator $A$ from the implementation of \cite{PDDN2019}\footnote{\url{https://github.com/pshibby/fepll_public}}. 
This operator consists of a blur operator $H$ and a downsampling operator $S$. 
The blur operator is given by a convolution with a Gauss kernel with standard deviation $0.5$. 
For the downsampling operator $S$ we use the discrete Fourier transform (DFT). 
Given an image $x\in\R^{m,n}$ the two-dimensional DFT 
is defined by $\mathcal F_{m,n}\coloneqq \mathcal F_n\otimes \mathcal F_m$, 
where $\mathcal F_n=(\exp(-2\pi i k l/n))_{k,l=0}^{n-1}$. Now, the downsampling operator $S\colon\R^{m,n}\to\R^{m_2,n_2}$ 
is given by
$$
S=\frac{m_2n_2}{mn}\mathcal F_{m_2,n_2}^{-1} D \mathcal F_{m,n},
$$
where for $x\in\R^{m,n}$ the $(i,j)$-th entry of $D(x)$ is given by
$$
\begin{cases}
x_{i,j}, &$if $i\leq\frac{m_2}{2}$ and $j\leq\frac{n_2}{2},\\
x_{i+m-m_2,j}, &$if $i>\frac{m_2}{2}$ and $j\leq\frac{n_2}{2},\\
x_{i,j+n-n_2}, &$if $i\leq\frac{m_2}{2}$ and $j>\frac{n_2}{2},\\
x_{i+m-m_2,j+n-n_2}, &$if $i>\frac{m_2}{2}$ and $j>\frac{n_2}{2}.
\end{cases}
$$

For a given high resolution image $x$, we now generate the low resolution image $y$ by $y=Ax+\epsilon$, 
where $\epsilon$ is a realization of white Gaussian noise with standard deviation $0.02$.

\paragraph{Initialization of the EM algorithms.}
Since the negative log-likelihood function is non-convex and admits many critical points, EM algorithms for GMMs are very sensitive with respect to the initialization. 
For example this can be seen by considering 
the case when $\vartheta_k^{(r)}=\vartheta_l^{(r)}$, $k,l=1,...,K$ for some $r\in\N$.
Then we obtain that $\beta_{i,k}^{(r)}=\alpha_k$ and consequently $\vartheta_k^{(r+1)}=\vartheta_l^{(r+1)}$, $k,l=1,...,K$.
The same effect appears for PCA-GMMs and HDDC.
Consequently the initialization of the EM algorithms is of great importance.
For our numerical examples, we initialize the GMMs as follows.
We set $\alpha^{(0)}_k=\tfrac1K$, for $k=1,...,K$. 
For initializing the means, we choose randomly $K$ distinct data points $\mu_1,...,\mu_K$ from our training data $x_1,...,x_N$. 
Finally, we choose for each $k=1,...,K$ the $M$ points $y_1,...,y_M$ from $x_1,...,x_N$ which are the closest ones to $\mu_k$ and initialize the covariances by $\Sigma_k=\tfrac1M\sum_{i=1}^My_iy_i^\tT$.
The number $M$ is chosen according to the dimension $n$ of the data. In our examples, we use $M=2n$.

We initialize the PCA-GMMs and HDDC by taking the initialization for GMMs, running one E-Step from the EM algorithm for GMMs followed by the M-step of the PCA-GMMs or HDDC, respectively.

\paragraph{Choice of $\sigma$ and $K$.}
The PCA-GMM model depends heavily on the choice of the parameter $\sigma$. 
As pointed out  in Subsection \ref{sec:alg-em}, this parameter could be learned from the data.
However, the forward model for the low resolution images $y=Ax+\epsilon$ for some (unknown) superresolution operator $A$, the high resolution image $x$ and noise $\epsilon$ suggest to choose the $\sigma$ according to the standard deviation of the noise $\epsilon$.
Note, that in our experiments, the low resolution images are artificially generated by applying a downsampling operator and adding some noise.
Consequently, the standard deviation of $\epsilon$ is known. 
Nevertheless, if the noise level is unknown, it could be estimated very accurately from the data based on homogeneous area detection as done, e.g., in \cite{HHLS2021,SDA2015}.

In practice, it can be unstable to estimate the standard deviation of the noise within the optimization of the mixture model, since this requires that the image patches belong exactly (and not only approximately) to a dimensionality reduced GMM with $K$ components, which is an unrealistic assumption.
Therefore, it can be beneficial and quite more accurate to estimate the standard deviation of the noise a priori.
In particular, if the standard deviation of the noise is known, fixing $\sigma$ can be the better approach.

Note that the noise with standard deviation $\sigma$ within the superresolution model does not necessarily imply that the eigenvalues of the covariance matrices in the mixture model are greater than or equal to $\sigma^2$ (which is assumed for HDDC), since the noise is only applied to the low resolution images.

Also the number of components $K$ of the mixture models can have a large impact on the results. For superresolution, a detailed comparison of the prediction quality for different choices of $K$ was done by Sandeep and Jacob in \cite{SJ2016}. 
They observed that the benefit of taking more than $100$ components in the GMM is usually very small. 
Therefore, we take $K=100$ components for all mixture model in our numerical examples.

\paragraph{Comparison of the computation times.}
Note that there already exist implementations of HDDC by some of the authors of \cite{BGS2006}.
However, to provide a fair comparison of the execution times, we reimplement the EM algorithm for HDDC in Python and Tensorflow, such that it supports GPU parallelization.
Further, note that we compute the updates of $\alpha$, $m$ and $C$ simultaneously to the E-step such that the corresponding execution time is contained in the E-step, even though the updates technically belong to the M-step.
This has the advantage that we have to iterate only once over the whole data set and enables a better parallelization.
We implemented this optimization of the order of computation for all of the models (GMM, PCA-GMM and HDDC) analogously.

\paragraph{2D-Data.}
For estimating the parameters of the mixture models, we use the upper left quarter of the image as in the top row of 
Figure \ref{fig_learn_ims}. 
As ground truth for the reconstruction we use the whole images as in the bottom row. 
The images in the left and middle columns are the middle slices of the material data "FS" and "SiC Diamonds". 
The high resolution images have a size of $2560\times 2560$.
The right column contains the \texttt{goldhill} image, which has the size $512\times 512$. 

\begin{figure}
\begin{subfigure}[t]{0.33\textwidth}
\centering
\includegraphics[width=\textwidth]{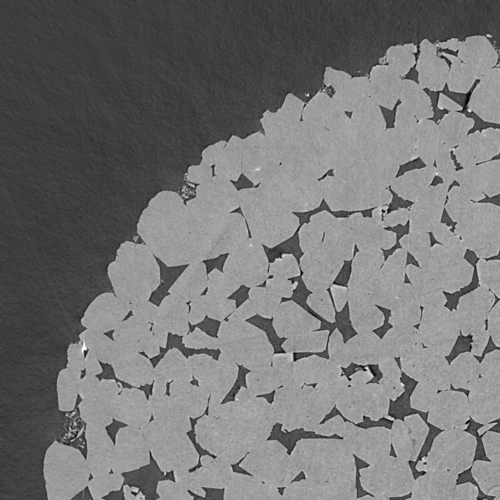}
\end{subfigure}\hfill
\begin{subfigure}[t]{0.33\textwidth}
\centering
\includegraphics[width=\textwidth]{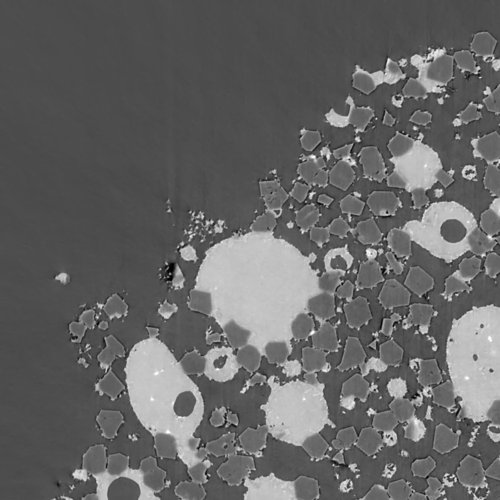}
\end{subfigure}\hfill
\begin{subfigure}[t]{0.33\textwidth}
\centering
\includegraphics[width=\textwidth]{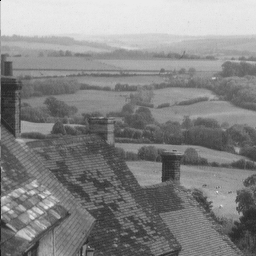}
\end{subfigure}\hfill
\begin{subfigure}[t]{0.33\textwidth}
\centering
\includegraphics[width=\textwidth]{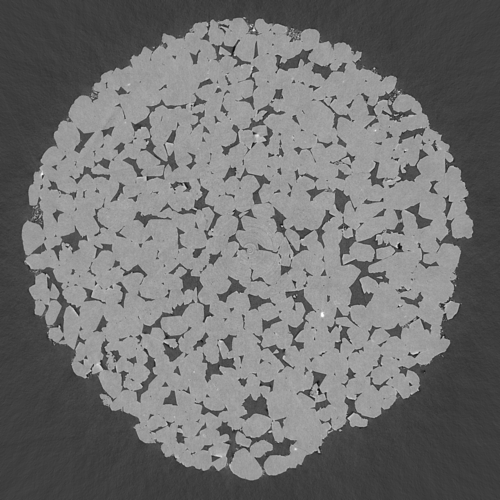}
\end{subfigure}\hfill
\begin{subfigure}[t]{0.33\textwidth}
\centering
\includegraphics[width=\textwidth]{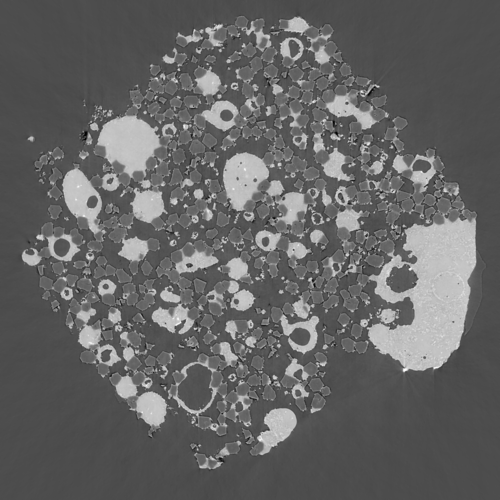}
\end{subfigure}\hfill
\begin{subfigure}[t]{0.33\textwidth}
\centering
\includegraphics[width=\textwidth]{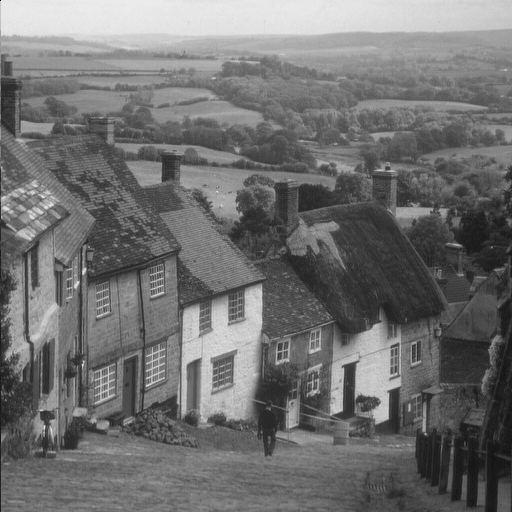}
\end{subfigure}\hfill
\caption{Top: Images for estimating the mixture models. Bottom: Ground truth for reconstruction. 
First column: Material "FS", second column: Material "SiC Diamonds", third column: \texttt{goldhill} image.}
\label{fig_learn_ims}
\end{figure}

We estimate the parameters of a GMM and of our PCA-GMM as described in the previous sections. 
First, we fix the parameter $\sigma$ in Algorithm \ref{alg_em_gmm_pca} as the standard deviation of the noise on the low dimensional image (i.e. $\sigma=0.02$). 
Second, we consider the case when $\sigma$ is learned from the data and finally we compare our results with HDDC \cite{BGS2006}.
Each mixture model has $K=100$ classes.
We use the magnification factors $q\in\{2,4\}$ and the patch size $\tau=4$ for the low resolution patches. This corresponds to a patch size of $q\tau=8$ or $q\tau=16$ respectively for the high resolution images. 
For the material images, this leads to $N\approx 400000$ patches for $q=2$ and $N\approx 100000$ for $q=4$. Using the \texttt{goldhill} image, we get $N\approx 15000$ patches for $q=2$ and $N\approx 3700$ patches for $q=4$.
We reduce the dimension of the pairs of high and low resolution patches from $n=(q^2+1)\tau^2=80$ or $n=(q^2+1)\tau^2=272$ respectively to $d$ for $d\in\{4,8,12,16,20\}$.
After estimating the mixture models, we use the reconstruction method from \cite{SJ2016} as described in the previous section to reconstruct the ground truth from the artificially downsampled images.
The resulting PSNRs are given in Table~\ref{tab_results}. As a reference we also measure the PSNR of the bicubic interpolation.
 The average execution times for one E-step and one M-step are given in Table~\ref{tab_times}.
Figure \ref{fig_zoom} shows some small areas of the high resolution images, low resolution images and the corresponding reconstructions for GMMs and PCA-GMM with $d=12$ and $d=20$. The result with $d=12$ for PCA-GMM is already almost as good as GMM, whereas the dimension of the patches was reduced by a factor between 4 and 22 (depending on the case).
Further, we observed that the dimensionality reduction reduces the execution time of the E-step significantly. On the other hand, the execution time of the M-step is larger than those in the GMM for all dimension reduced models due its higher complexity.
Comparing the different dimensionality reduced models, we observe that the PCA-GMM with fixed $\sigma$ gives significantly better results than the other models, while HDDC achieves the fastest M-step due to the closed-form updates.
However, compared to the execution time of the E-step, this advantage seems to be negligible for large data sets as, e.g., the patches from the FS and SiC Diamonds image.

\begin{table}[htp]
\begin{center}
\begin{tabular}{c|c| c c c | c c c}
&&\multicolumn{3}{c|}{Magnification factor $q=2$}&\multicolumn{3}{c}{Magnification factor $q=4$}\\\hline
&$d$&FS&Diamonds&Goldhill&FS&Diamonds&Goldhill\\\hline
bicubic&-&$30.57$&$30.67$&$28.99$&$25.27$&$25.19$&$24.66$\\
GMM&-&$35.49$&$37.21$&$31.63$&$30.69$&$30.74$&$27.80$\\\hline
\multirow{5}{6em}{\centering PCA-GMM, $\sigma=0.02$}
&$20$&$35.44$&$37.24$&$31.25$&$30.75$&$30.74$&$27.64$\\
&$16$&$35.42$&$37.22$&$31.25$&$30.74$&$30.62$&$27.59$\\
&$12$&$35.47$&$37.13$&$31.18$&$30.67$&$30.48$&$27.55$\\
&$8$&$35.32$&$36.69$&$31.00$&$30.46$&$30.16$&$27.38$\\
&$4$&$34.69$&$35.23$&$30.42$&$29.78$&$29.24$&$26.89$\\\hline
\multirow{5}{6em}{\centering PCA-GMM, learned $\sigma$}
&$20$&$35.22$&$37.06$&$31.27$&$30.43$&$30.51$&$27.66$\\
&$16$&$35.14$&$37.01$&$31.14$&$30.34$&$30.31$&$27.51$\\
&$12$&$34.95$&$36.54$&$30.94$&$30.13$&$29.84$&$27.33$\\
&$8$&$34.43$&$35.47$&$30.54$&$29.62$&$29.08$&$26.88$\\
&$4$&$32.74$&$33.41$&$29.69$&$28.51$&$27.75$&$26.16$\\\hline
\multirow{5}{6em}{\centering HDDC \cite{BGS2006}}
&$20$&$35.35$&$37.12$&$31.35$&$30.54$&$30.63$&$27.73$\\
&$16$&$35.31$&$37.10$&$31.25$&$30.47$&$30.48$&$27.62$\\
&$12$&$35.24$&$36.64$&$31.08$&$30.27$&$30.08$&$27.40$\\
&$8$&$34.76$&$35.66$&$30.76$&$29.80$&$29.34$&$27.00$\\
&$4$&$33.46$&$33.86$&$29.93$&$28.61$&$27.99$&$26.37$
\end{tabular}
\end{center}
\caption{PSNRs of the reconstructions of artificially downsampled 2D images using either bicubic interpolation, a GMM, PCA-GMM for different choices of $d$ or HDDC. The magnification factor is set to $q\in\{2,4\}$.
PCA-GMM produces results almost as good as GMM, with a much lower dimensionality.
}
\label{tab_results}
\end{table}

\begin{table}[htp]
\begin{center}
\begin{tabular}{c|c| c c | c c | c c}
&&\multicolumn{6}{c}{Magnification factor $q=2$, i.e.\ dimension $n=80$}\\\hline
&&\multicolumn{2}{c|}{FS, $N=405769$}&\multicolumn{2}{c|}{Diamonds, $N=405769$}&\multicolumn{2}{c}{Goldhill, $N=15625$}\\
&$d$&E-step&M-step&E-step&M-step&E-step&M-step\\\hline
GMM&-&$10.91$&$0.06$&$10.91$&$0.06$&$0.44$&$0.06$\\\hline
\multirow{3}{6em}{\centering PCA-GMM, $\sigma=0.02$}
&$20$&$\n7.25$&$0.74$&$\n7.42$&$0.57$&$0.28$&$0.54$\\
&$12$&$\n6.58$&$0.59$&$\n6.53$&$0.51$&$0.25$&$0.46$\\
&$4$&$\n6.18$&$0.56$&$\n6.17$&$0.52$&$0.24$&$0.48$\\\hline
\multirow{3}{6em}{\centering PCA-GMM, learned $\sigma$}
&$20$&$\n7.28$&$0.54$&$\n7.41$&$0.54$&$0.28$&$0.54$\\
&$12$&$\n6.59$&$0.47$&$\n6.53$&$0.45$&$0.25$&$0.47$\\
&$4$&$\n6.20$&$0.47$&$\n6.17$&$0.44$&$0.24$&$0.51$\\\hline
\multirow{3}{6em}{\centering HDDC \cite{BGS2006}}
&$20$&$\n7.27$&$0.27$&$\n7.44$&$0.27$&$0.28$&$0.26$\\
&$12$&$\n6.64$&$0.26$&$\n6.64$&$0.26$&$0.25$&$0.26$\\
&$4$&$\n6.27$&$0.27$&$\n6.23$&$0.26$&$0.24$&$0.26$
\end{tabular}\\\vspace{.5cm}
\begin{tabular}{c|c| c c | c c | c c}
&&\multicolumn{6}{c}{Magnification factor $q=4$, i.e.\ dimension $n=272$}\\\hline
&&\multicolumn{2}{c|}{FS, $N=100489$}&\multicolumn{2}{c|}{Diamonds, $N=100489$}&\multicolumn{2}{c}{Goldhill, $N=3721$}\\
&$d$&E-step&M-step&E-step&M-step&E-step&M-step\\\hline
GMM&-&$17.15$&$0.06$&$17.11$&$0.06$&$0.90$&$0.06$\\\hline
\multirow{3}{6em}{\centering PCA-GMM, $\sigma=0.02$}
&$20$&$\n8.65$&$3.54$&$\n8.68$&$2.03$&$0.44$&$1.83$\\
&$12$&$\n8.17$&$2.73$&$\n8.15$&$1.99$&$0.42$&$1.75$\\
&$4$&$\n7.95$&$2.10$&$\n7.94$&$2.49$&$0.41$&$1.91$\\\hline
\multirow{3}{6em}{\centering PCA-GMM, learned $\sigma$}
&$20$&$\n8.65$&$1.92$&$\n8.70$&$1.83$&$0.44$&$1.87$\\
&$12$&$\n8.17$&$1.99$&$\n8.17$&$1.74$&$0.42$&$1.74$\\
&$4$&$\n7.94$&$2.17$&$\n7.93$&$1.76$&$0.41$&$1.72$\\\hline
\multirow{3}{6em}{\centering HDDC \cite{BGS2006}}
&$20$&$\n8.65$&$1.53$&$\n8.71$&$1.54$&$0.44$&$1.52$\\
&$12$&$\n8.16$&$1.54$&$\n8.14$&$1.53$&$0.42$&$1.52$\\
&$4$&$\n7.95$&$1.54$&$\n7.96$&$1.54$&$0.41$&$1.52$
\end{tabular}
\end{center}
\caption{
Average execution time (in seconds) for the E-step and M-step in the EM algorithm for estimating the parameters of the mixture models.
}
\label{tab_times}
\end{table}

\begin{figure}
\begin{subfigure}[t]{0.25\textwidth}
\centering
\includegraphics[width=\textwidth]{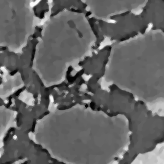}
\end{subfigure}\hfill
\begin{subfigure}[t]{0.25\textwidth}
\centering
\includegraphics[width=\textwidth]{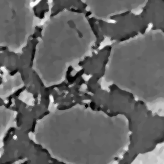}
\end{subfigure}\hfill
\begin{subfigure}[t]{0.25\textwidth}
\centering
\includegraphics[width=\textwidth]{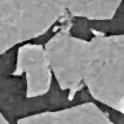}
\end{subfigure}\hfill
\begin{subfigure}[t]{0.25\textwidth}
\centering
\includegraphics[width=\textwidth]{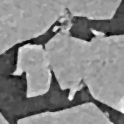}
\end{subfigure}\hfill
\begin{subfigure}[t]{0.25\textwidth}
\centering
\includegraphics[width=\textwidth]{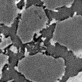}
\end{subfigure}\hfill
\begin{subfigure}[t]{0.25\textwidth}
\centering
\includegraphics[width=\textwidth]{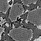}
\end{subfigure}\hfill
\begin{subfigure}[t]{0.25\textwidth}
\centering
\includegraphics[width=\textwidth]{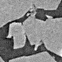}
\end{subfigure}\hfill
\begin{subfigure}[t]{0.25\textwidth}
\centering
\includegraphics[width=\textwidth]{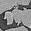}
\end{subfigure}\hfill
\begin{subfigure}[t]{0.25\textwidth}
\centering
\includegraphics[width=\textwidth]{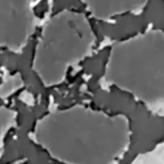}
\end{subfigure}\hfill
\begin{subfigure}[t]{0.25\textwidth}
\centering
\includegraphics[width=\textwidth]{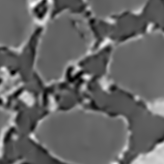}
\end{subfigure}\hfill
\begin{subfigure}[t]{0.25\textwidth}
\centering
\includegraphics[width=\textwidth]{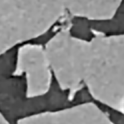}
\end{subfigure}\hfill
\begin{subfigure}[t]{0.25\textwidth}
\centering
\includegraphics[width=\textwidth]{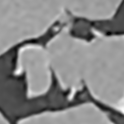}
\end{subfigure}\hfill
\begin{subfigure}[t]{0.25\textwidth}
\centering
\includegraphics[width=\textwidth]{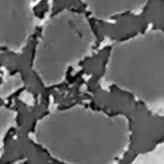}
\end{subfigure}\hfill
\begin{subfigure}[t]{0.25\textwidth}
\centering
\includegraphics[width=\textwidth]{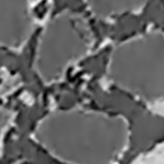}
\end{subfigure}\hfill
\begin{subfigure}[t]{0.25\textwidth}
\centering
\includegraphics[width=\textwidth]{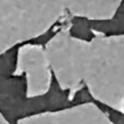}
\end{subfigure}\hfill
\begin{subfigure}[t]{0.25\textwidth}
\centering
\includegraphics[width=\textwidth]{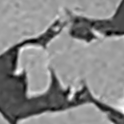}
\end{subfigure}\hfill
\begin{subfigure}[t]{0.25\textwidth}
\centering
\includegraphics[width=\textwidth]{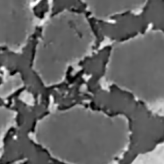}
\caption{Diamonds, $q=2$}
\end{subfigure}\hfill
\begin{subfigure}[t]{0.25\textwidth}
\centering
\includegraphics[width=\textwidth]{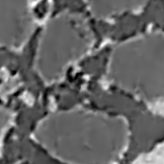}
\caption{Diamonds, $q=4$}
\end{subfigure}\hfill
\begin{subfigure}[t]{0.25\textwidth}
\centering
\includegraphics[width=\textwidth]{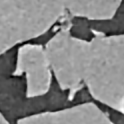}
\caption{FS, $q=2$}
\end{subfigure}\hfill
\begin{subfigure}[t]{0.25\textwidth}
\centering
\includegraphics[width=\textwidth]{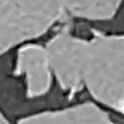}
\caption{FS, $q=4$}
\end{subfigure}\hfill
\caption{Reconstructions of 2D low resolution images. First row: ground truth, second row: low resolution, third row: reconstruction with GMM, fourth row: reconstruction with PCA-GMM and $d=20$, fifth row: reconstruction with PCA-GMM and $d=12$. The larger of $d$, the closer is the result of PCA-GMM to GMM.}
\label{fig_zoom}
\end{figure}

\begin{figure}
\begin{subfigure}[t]{0.5\textwidth}
\centering
\includegraphics[width=\textwidth]{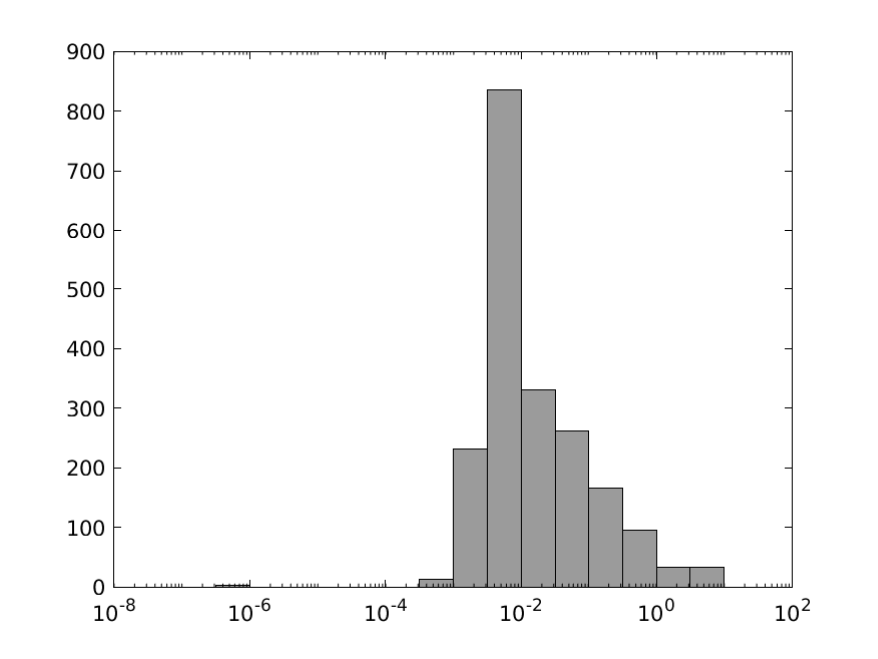}
\subcaption{FS with magnification $q=4$}
\end{subfigure}\hfill
\begin{subfigure}[t]{0.5\textwidth}
\centering
\includegraphics[width=\textwidth]{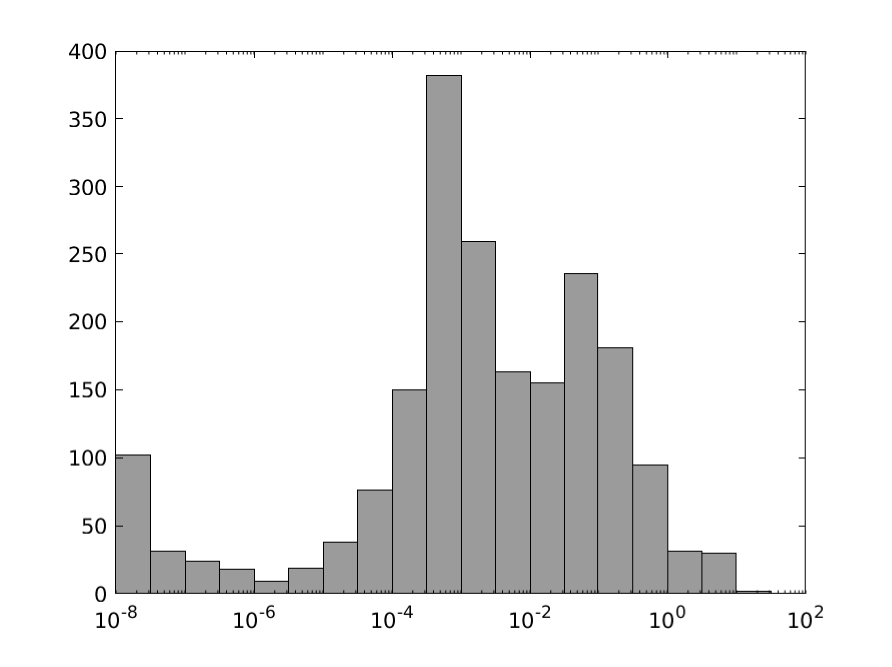}
\subcaption{SiC Diamonds with magnification $q=4$}
\end{subfigure}
\caption{Histograms of the eigenvalues of $\Sigma_k$, $k=1,...,K$ for the PCA-GMM with fixed $\sigma=0.02$ for $d=20$.}
\label{fig_spectra}
\end{figure}
Figure \ref{fig_spectra} shows a histogram of the eigenvalues of the covariance matrices $\Sigma_k$, $k=1,...,K$ of the PCA-GMM model with fixed $\sigma=0.02$ for the FS and SiC Diamonds image with magnification $q=4$. 
We observe, that for the SiC Diamonds image a significant amount of eigenvalues are smaller than $\sigma^2=4\cdot 10^{-4}$ which is not possible within a HDDC model \cite{BGS2006}.
For the FS image, the eigenvalues are mostly greater than $\sigma^2$.

\paragraph{3D-Data.}

In the following, we  present the same  experiments as in the 2D-case but with 3D-data. 
For this we crop a $600\times600\times600$ image from the material images "FS" and "SiC Diamonds". 
For the estimation of the mixture model, we use the upper front left $300\times300\times300$ part of the images and crop randomly $N=1000000$ patches.

Again, we estimate the parameters of a GMM and a PCA-GMM with $K=100$ classes and fixed $\sigma=0.02$ as described in the previous sections.
 Since we have seen in the 2D examples that the results of PCA-GMMs with learned $\sigma$ and HDDC are similar,
we compare our 3D results just with HDDC.
As magnification factor, we use $q=2$. For the low resolution image we use $\tau\times\tau\times\tau$-patches with patch size $\tau=4$ and for the high resolution image we use a patch size of $q\tau=8$. 
We reduce the dimension of the pairs of high and low resolution patches from 
$n=(q^3+1)\tau^3=576$ to $d$ for $d\in\{20,40,60\}$.
After estimating the mixture models, we use the reconstruction method from \cite{SJ2016} 
as described in the previous paragraph to reconstruct the ground truth from of the artificially downsampled images.
The resulting PSNRs are given in Table~\ref{tab_results_3d}
and the average execution times of one E-step and one M-step are given in Table~\ref{tab_times_3d}.
As a reference we also measure the PSNR of the nearest neighbor interpolation.

\begin{table}[htp]
\begin{center}
\begin{tabular}{c|c| c c}
&$d$&FS&Diamonds\\
\hline
Nearest neighbor&-&$30.10$&$26.25$\\
GMM&-&$33.32$&$30.71$\\
\hline
\multirow{3}{6em}{\centering PCA-GMM, $\sigma=0.02$}
&$60$&$33.38$&$30.83$\\
&$40$&$33.36$&$30.75$\\
&$20$&$33.25$&$30.17$\\\hline
\multirow{3}{6em}{\centering HDDC \cite{BGS2006}}
&$60$&$33.23$&$30.49$\\
&$40$&$33.24$&$30.29$\\
&$20$&$33.02$&$29.47$\\
\end{tabular}
\end{center}
\caption{\label{tab_results_3d}
PSNRs of the reconstructions of artificially downsampled 3D images using either nearest neighbor interpolation, 
GMM or PCA-GMM for different choices of $d$. The magnification factor is set to $q=2$.
As in the 2D case,  PCA-GMM with small $d$ produces results almost as good as GMM, but with a much lower dimensionality.
}
\end{table}

\begin{table}[htp]
\begin{center}
\begin{tabular}{c|c| c c | c c}
&&\multicolumn{2}{c|}{FS}&\multicolumn{2}{c}{Diamonds}\\
&$d$&E-step&M-step&E-step&M-step\\
\hline
GMM&-&$717.91$&$\n0.07$&$718.13$&$\n0.07$\\
\hline
\multirow{3}{6em}{\centering PCA-GMM, $\sigma=0.02$}
&$60$&$338.22$&$12.29$&$337.44$&$17.49$\\
&$40$&$327.34$&$\n9.73$&$324.93$&$13.87$\\
&$20$&$320.00$&$\n7.85$&$319.46$&$\n9.80$\\\hline
\multirow{3}{6em}{\centering HDDC \cite{BGS2006}}
&$60$&$337.29$&$\n4.15$&$337.42$&$\n4.16$\\
&$40$&$327.11$&$\n4.19$&$324.95$&$\n4.15$\\
&$20$&$320.03$&$\n4.20$&$319.07$&$\n4.15$\\
\end{tabular}
\end{center}
\caption{\label{tab_times_3d}
Average execution time (in seconds) of the E-step and M-step in the EM algorithm for estimating the parameters of the mixture models.
}
\end{table}

\section{Conclusions} \label{sec:conclusions}
In this paper, we presented a new algorithm to perform image superresolution.
Based on previous work by Sandeep and Jacob \cite{SJ2016}, 
we added a dimension reduction step within the GMM model using  PCA on patches. 
The new variational model, called PCA-GMM is of interest on its own, and can be also applied
for other tasks. 
We solved our PCA-GMM model by an EM algorithm with the usual decreasing guarantees for the objective 
if the E-step and M-step can be performed exactly, see Corollary \ref{cor:EM}.
However, our M-step requires to solve a non-convex constrained minimization problem.
Here we propose a PALM algorithm and prove that all assumptions for the
convergence of the sequence of iterates to a critical point required by \cite{BST2014} are fulfilled,
see Corollary \ref{cor:EM-M}.
Our algorithm has the advantage that the M-step is cheap in relation to  the E-step since it does not rely on the
large numbers of samples in the inner iterations. 

We have demonstrated the efficiency of the new model by numerical examples, 
in the case of 2D and 3D images.
They confirm that PCA-GMM is an efficient way of reducing the dimension of the patches, while keeping almost the same quality of the results than with a GMM algorithm.
This dimension reduction is of the utmost importance when dealing with 3D images, 
where the size of the data gets very large.

As future work, apart from the mathematical analysis of the EM algorithm with approximate M-step, we intend to work on the robustness of the method. This could be done by using a 
robust PCA \cite{NNSS2020}, and also by making the model invariant to contrast changes, see, e.g. \cite{FLS2017}.
Further, we aim to deal with material examples, where we do not subsample the images in a synthetic way.
In particular, we will not know the subsampling operator.
Within ITN MUMMERING such measurements were taken, but require to undergo an advanced registration process.

Finally, we are aware of deep learning techniques for superresolution, see, e.g. \cite{KSKHP2017,iNN_Seg2020}.
We will consider such approaches in the future which would also benefit from 
dimensionality reduction, in particular in 3D.

\section*{Acknowledgment}
 Funding by the German Research Foundation (DFG) with\-in the project STE 571/16-1
 as well as by the French Agence Nationale de la Recherche (ANR) under reference ANR-18-CE92-0050 SUPREMATIM,
 is gratefully acknowledged.
 The EU Horizon 2020 Marie Sklodowska-Curie Actions Innovative Training Network MUMMERIN (MUltiscale, Multimodal and Multidimensional imaging for EngineeRING, Grant Number 765604) is also acknowledged.

\bibliographystyle{abbrv}
\bibliography{ref}
\end{document}